\documentclass{article}

\usepackage[accepted]{icml2026}

\usepackage[utf8]{inputenc} 
\usepackage[T1]{fontenc}    
\usepackage{hyperref}       
\usepackage{url}            
\usepackage{booktabs}       
\usepackage{amsfonts, amsthm, float}       
\usepackage{nicefrac}       
\usepackage{microtype}      
\usepackage[dvipsnames]{xcolor}         
\usepackage{amsmath, bm, graphicx}
\usepackage{enumitem}

\DeclareMathOperator{\argmax}{argmax}
\DeclareMathOperator{\E}{\mathbb{E}}
\DeclareMathOperator{\R}{\mathbb{R}}

\DeclareMathOperator{\vectorize}{vec}

\newtheorem{thm}{Theorem}

\newtheorem{prop}[thm]{Proposition}

\makeatletter
\newcommand{\vast}{\bBigg@{4}}
\newcommand{\Vast}{\bBigg@{5}}
\makeatother

\icmltitlerunning{BALLAST}

\begin{document}

\twocolumn[

\icmltitle{BALLAST: Bayesian Active Learning with Look-ahead Amendment for Sea-drifter Trajectories under Spatio-Temporal Vector Fields}

\begin{icmlauthorlist}
    \icmlauthor{Rui-Yang Zhang}{lan}
    \icmlauthor{Lachlan Astfalck}{uwa,unsw}
    \icmlauthor{Edward Cripps}{uwa}
    \icmlauthor{David Leslie}{lan}
    \icmlauthor{Henry Moss}{lan}
\end{icmlauthorlist}

\icmlaffiliation{lan}{Lancaster University}
\icmlaffiliation{unsw}{University of New South Wales}
\icmlaffiliation{uwa}{University of Western Australia}

\icmlcorrespondingauthor{Rui-Yang Zhang}{r.zhang26@lancaster.ac.uk}

\vskip 0.3in

]

\printAffiliationsAndNotice{}

\begin{abstract}
We introduce a formal active learning methodology for guiding the placement of Lagrangian observers to infer time-dependent vector fields -- a key task in oceanography, marine science, and ocean engineering -- using a physics-informed spatio-temporal Gaussian process surrogate model. The majority of existing placement campaigns either follow standard `space-filling' designs or relatively ad-hoc expert opinions. A key challenge to applying principled active learning in this setting is that Lagrangian observers are continuously advected through the vector field, so they make measurements at different locations and times. It is, therefore, important to consider the likely future trajectories of placed observers to account for the utility of candidate placement locations. To this end, we present BALLAST: Bayesian Active Learning with Look-ahead Amendment for Sea-drifter Trajectories. We observe noticeable benefits of BALLAST-aided sequential observer placement strategies on both synthetic and high-fidelity ocean current models. In addition, we developed a novel GP inference method -- the Vanilla SPDE Exchange (VaSE) -- to boost the GP posterior sampling efficiency, which is also of independent interest.  
\end{abstract}

\section{Introduction} \label{sec:introduction}

Understanding and predicting ocean currents is of vital importance to mapping the flow of heat, nutrients, pollutants and sediments in the ocean \citep{ferrari2009ocean, keramea2021oil}. Ocean currents are inferred from a plurality of measurement devices, such as fixed-location buoys, satellites and free-floating buoys, known as \textbf{drifters} \citep{lumpkin2017advances}. Free-floating drifters are being increasingly used due to their ability to sample both spatial and temporal flow properties and remain relatively affordable as compared to other measurement devices \citep{ponte2024inferring}. Once placed, drifters will be advected by the underlying (time-dependent) vector fields and take velocity measurements at different locations and times, thus they are \textbf{Lagrangian observers} since they represent the Lagrangian specification of flows \citep{griffa2007lagrangian}. 

The majority of existing drifter placement campaigns either follow standard `space-filling' designs \citep{tukan2024efficient} or relatively ad-hoc expert opinions \citep{van2021dispersion, poje2002drifter}. There also exists work such as \citet{salman2008using, chen2024launching, bollt2024causation} that proposed hand-crafted criteria (e.g. travel distance and placement separation) for placement under the Lagrangian data assimilation inference framework \citep{apte2008bayesian} that appeal to information theory. However, a placement strategy explicitly using active learning, to the best of our knowledge, has not yet been presented in the literature.

Active learning \citep{settles2009active} is a type of sequential experimental design \citep{gramacy2020surrogates} that iteratively selects the optimal observation point to maximise the total knowledge about the system of interest given existing data by optimising a utility function --- often related to the information gain of the observation outcome \citep{rainforth2024modern}. As the system of interest here is the evolving ocean currents, we consider the \textit{spatio-temporal} active learning over a two-dimensional spatial region and a finite time horizon.

After realising the inadequacy of standard active learning methods for Lagrangian observers (see Section \ref{sec:pitfall} for more details), we propose \textbf{BALLAST} --- Bayesian Active Learning with Lookahead Amendment for Sea-drifter Trajectories. BALLAST accounts for the data structure of Lagrangian observers by simulating hypothetical trajectories using vector fields. Here, we used a spatio-temporal vector-output Gaussian process (GP) surrogate (see Figure \ref{fig:inference} for an illustration) and proposed an information-theoretic utility for the active learning, and devised an original GP inference method, the \textbf{Vanilla SPDE Exchange} (VaSE), for efficient posterior sampling that is \textit{thousands} of times faster than SPDE-GP and \textit{billions} of times faster than standard GP for the problems we consider here (see Section \ref{sec:sampling-posterior-SPDE} for details). 

Our contributions can be summarised as follows: (i) we introduce active learning concepts to the literature of Lagrangian observer placements, (ii) we propose BALLAST, a novel active learning amendment that accounts for Lagrangian observations using samples from surrogates, and (iii) we develop the vanilla-SPDE exchange, a new GP inference method combining standard GP regression and the SPDE approach \citep{sarkka2013spatiotemporal}, for efficient BALLAST utility computation, which may be of independent interest especially for spatio-temporal GPs with non-gridded observations. Our numerical results suggest noticeable benefits of BALLAST-aided active learning for sequential observer deployment on both synthetic and high-fidelity ocean current models.

\subsection{Notation}

In the rest of the paper, we use $f$ to denote the object of interest with distribution $p(f)$. The object $f$ is usually modelled using a zero-mean and kernel $k$ GP, denoted by $f \sim \mathcal{GP}(0, k)$. The Gram matrix constructed with kernel $k$ is denoted by $K$. When the Gram matrix is computed between two identical test points $X$, i.e. $K(X,X)$, we will simplify the notation by $K(X) := K(X, X)$. We also use $p(f|\mathcal{D}_n)$ to denote the posterior distribution and $p(y |\mathcal{D}_n, x)$ to denote the posterior predictive distribution at $x$ after observing data $\mathcal{D}_n$. Samples from $p(f|\mathcal{D}_n)$ are denoted by $F$ in general and $F^{(j)}$ for the $j$-th sample. 

\section{Background} \label{sec:st-active-learning}

\begin{figure*}[ht]
    \centering
    \includegraphics[width=0.85\linewidth]{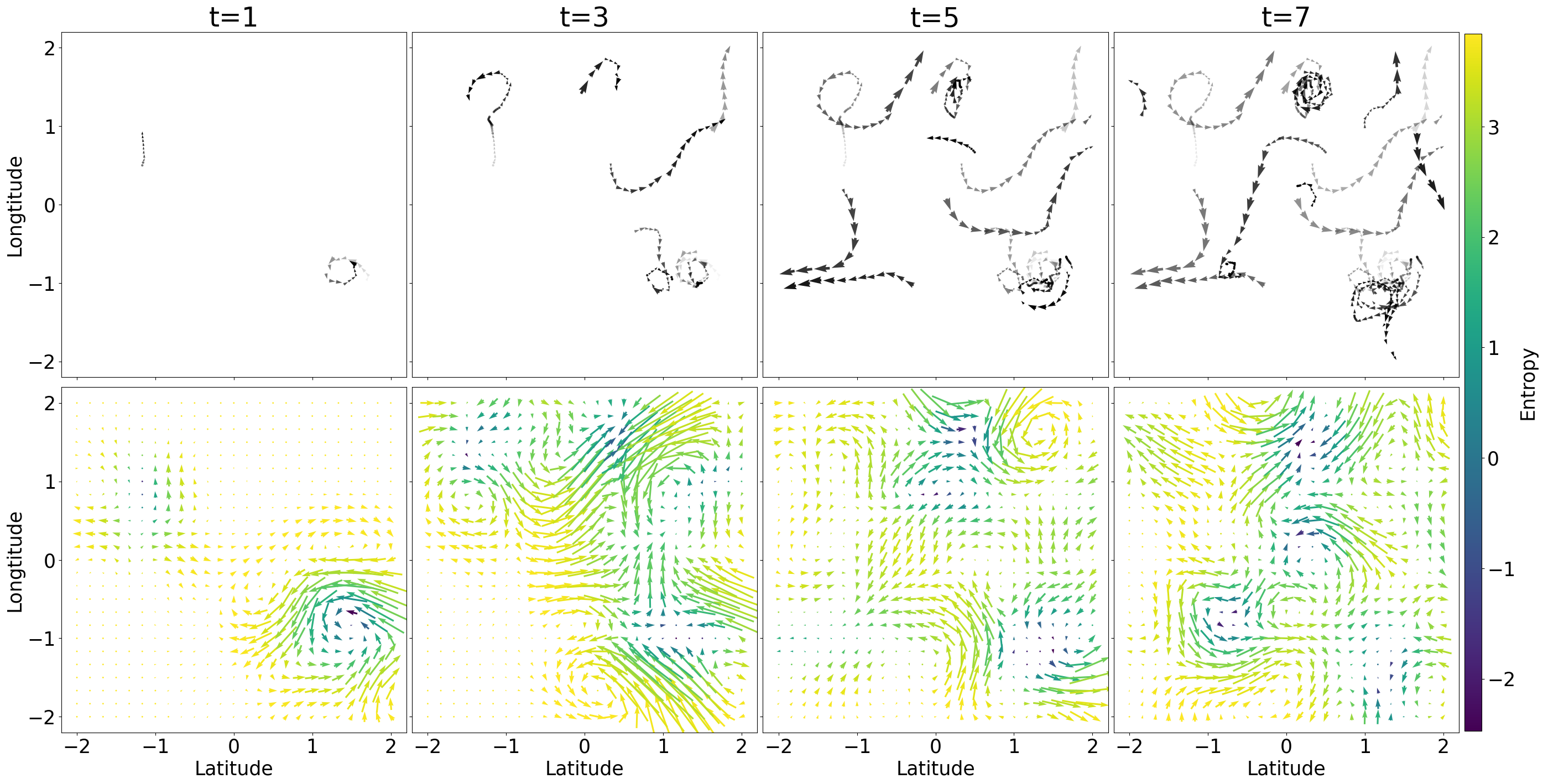}%
    \caption{Illustration of spatio-temporal GP regression of Lagrangian trajectories. The top row shows the aggregated observations at different times. The bottom row shows the regressed GP marginals at corresponding times, where the posterior mean is plotted with colours following entropies of the respective random vectors.}
    \label{fig:inference}
\end{figure*}

\subsection{Gaussian Process for Time-Dependent Vector Fields} \label{sec:gp-model}

A surrogate model is employed in active learning to model the unknown object of interest $f$ while providing uncertainty quantification, commonly chosen to be a GP \citep{williams2006gaussian, gramacy2020surrogates}. In this study, the system of interest is a time-dependent vector field, so we consider a spatio-temporal, vector-output GP as the surrogate used for active learning. 

Following \citet{ponte2024inferring}, we consider an extension to the Helmholtz kernel $k_\text{Helm}$ of \citet{berlinghieri2023gaussian} -- a vector-output kernel \citep{alvarez2012kernels} using the Helmholtz decomposition \citep{bhatia2012helmholtz} to more realistically portray the vector field structure -- by including a separable temporal kernel $k_\text{time}$, which results in the temporal Helmholtz kernel \[
k_\text{tHelm}\left( (\bm{s}, t), (\bm{s}', t')\right) = k_\text{Helm}(\bm{s}, \bm{s}') k_\text{time}(t, t')
\]
with $\bm{s}, \bm{s}' \in \R^2, t,t' \in \R$. The spatial Helmholtz kernel $k_\text{Helm}$ is constructed from linear differential operators applied to the potential and stream function kernels, which are two two-dimensional input, scalar output kernels such as the SE kernel (see Section \ref{sec:helmholtz-gp} for details). The temporal kernel $k_\text{time}$ is set to be a Mat\'{e}rn $3/2$ kernel: evidence in oceanographic research suggests that the smoothness $\nu$ is around 2; as this leads to a non-analytic representation of the Bessel function, often $\nu = 3/2$ is taken as a sufficient approximation \citep{lilly2017fractional,ponte2024inferring}. We will also use $\bm{x} = (\bm{s}, t)$ to denote the input of the GP. An illustration of the spatio-temporal GP regression of Lagrangian trajectories is displayed as Figure \ref{fig:inference}. 

\subsection{Sequential Experimental Design}

Sequential experimental design, with active learning as a special case, selects an optimal measurement point $x^*_{n}$ from measurement set $X$ at each time $t_n$ using existing data $\mathcal{D}_n$ by optimising the expectation of utility function $U$ over the posterior predictive distribution $p(y|\mathcal{D}_n, x)$ at $x \in X$, mathematically formulated as \begin{equation}\label{eqn:standard_AL}
x^*_{n} = \argmax_{x \in X} \mathbb{E}_{y \sim p(y|\mathcal{D}_n, x)} [U(y)].
\end{equation}
A common utility choice for (Bayesian) active learning is the negative entropy (see Section \ref{sec:information-theory} for definitions) \citep{lindley1956measure, ryan2016review}. For a probabilistic object of interest $f$, the \textbf{information gain} of an additional observation $y$ given existing data $\mathcal{D}_n$ is the reduction in entropy $H(\cdot)$ between the prior $p(f|\mathcal{D}_n)$ and posterior $p(f|\mathcal{D}_n,y) \propto p(f|\mathcal{D}_n) p(y|f)$, given by \[
IG(y) := H(p(f|\mathcal{D}_n)) - H(p(f | \mathcal{D}_n, y)).
\]
Therefore, the \textbf{expected information gain} (EIG) policy selects the next measurement point $x_{n}^*$ using the posterior predictive $p(y|\mathcal{D}_n, x)$ at $x$ \[
\begin{split}
    x^*_{n} &= \argmax_{x \in X} \mathbb{E}_{y \sim p(y|\mathcal{D}_n, x)} [IG(y)] \\
    &= \argmax_{x \in X} \mathbb{E}_{y \sim p(y|\mathcal{D}_n, x)} [- H(p(f | y, \mathcal{D}_n))].
\end{split}
\]

\subsection{Problem Setup}

Here, our active learning object of interest $f$ depends on both space and time. We assume pre-determined decision times, so we only select the placement location of the Lagrangian observers at each decision time. This simplifying assumption is commonly imposed in the literature to reduce the sequential design's search space \citep{huang2022physics, fanshawe2012adaptive, wikle1999space}, and is an accurate representation of many real-world drifter deployment practices \citep{lilly2021gridded}. Additionally, we assume the Lagrangian observers produce data with negligible delay -- a realistic assumption as modern surface drifters broadcast GPS positions in real time at regular intervals with high accuracy \citep{novelli2017biodegradable}.

In particular, we will focus on two-dimensional, time-dependent vector fields that are temporally defined on the finite time interval $[0, T]$ with terminal time $T \in \R$ and are spatially supported on a closed rectangle $[a_1, b_1] \times [a_2, b_2] \subset \R^2$. For some calculations, we will discretise the time interval into $N_\text{time}$ even segments with time steps $\mathcal{T} := \{0, \delta_t, \ldots, (N_\text{time}-1)\delta_t\}$, while the spatial domain is discretised into a regular grid with cells centred at $R = \{\bm{s}_i\}_{i = 1}^{N_\text{space}}$. 

\section{Active Learning of Time-Dependent Vector Fields}

We are ready to present our active learning loop for identifying a time-dependent vector field under the setup outlined in Section \ref{sec:st-active-learning} using the GP surrogate defined in Section \ref{sec:gp-model}. We model the target vector field using the temporal Helmholtz surrogate $f \sim \mathcal{GP}(0, k_{\text{tHelm}})$, and assume we make noisy velocity observations $\bm{y}$ at observation location $\bm{x} = (\bm{s}, t)$ where \[
\bm{y} = f(\bm{x}) + \varepsilon = f(\bm{s}, t) + \varepsilon, \qquad \varepsilon \sim N(0, \sigma_\text{obs}^2I_2)
\] 
and $\sigma_\text{obs}$ is the standard deviation of the observation noise. The active learning selects the placement location of Lagrangian observers at each placement time in order to maximise the information gain of the surrogate $f$ over the full spatio-temporal grid. 

\begin{figure}[h]
    \centering
    \includegraphics[width=0.9\linewidth]{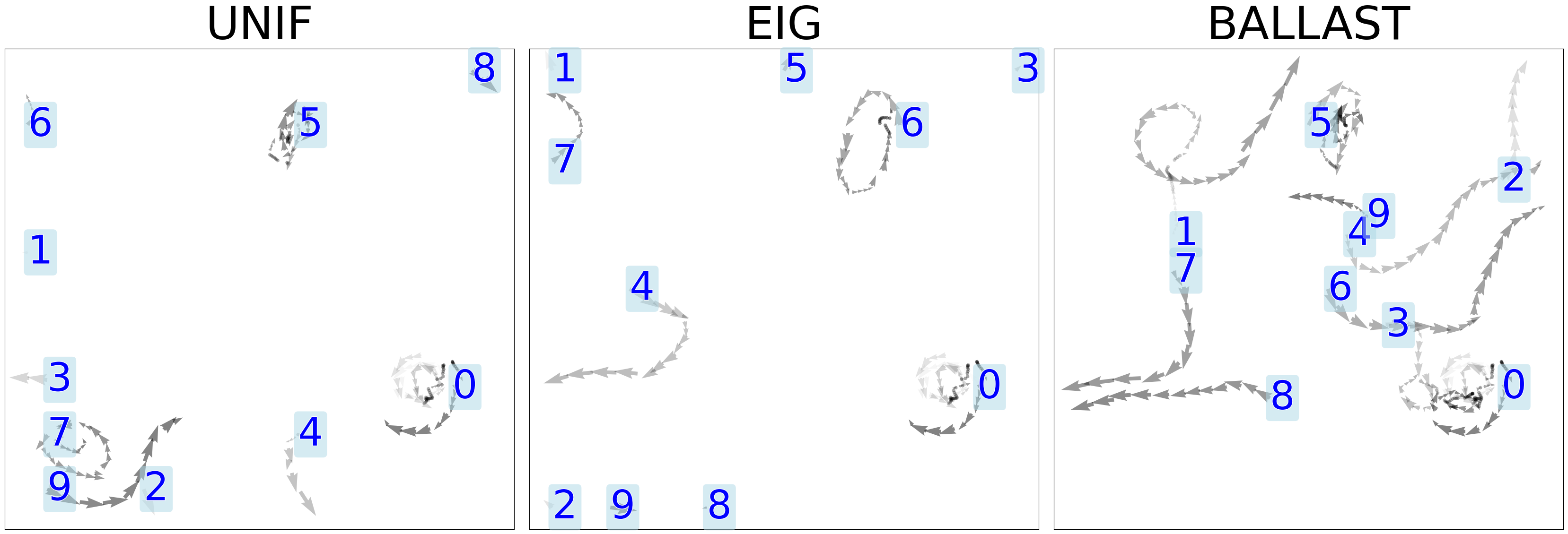}%
    \caption{Deployment comparison under the uniform policy (left), EIG (middle), and our proposed BALLAST (right). Ten Lagrangian observers are placed sequentially, with their placement locations in blue. The observations are plotted with varying brightness according to the observation time (later is brighter).}
    \label{fig:deployment-comp}
\end{figure}

Let $t_n$ be the time when we are deciding the placement location of the $n$-th observer. The observations so far are denoted by $\mathcal{D}_n = \{X_n, \bm{y}_n\}$ with $X_n, \bm{y}_n$ being the full observation locations and values. Given these observations, the next placement location $\bm{s}_{n}^*$ at placement time $t_{n}$ is \begin{equation} \label{eqn:vanilla-AL}
\begin{split}
  &\bm{s}_{n}^* = \argmax_{\bm{s} \in R}\\
    &\qquad \E_{y \sim p(y | \mathcal{D}_n, \bm{s}, t_{n})} [-H\left(p(f| \mathcal{D}_n \cup \{(\bm{s}, t_n, \bm{y})\})\right)]    
\end{split}  
\end{equation}
where $p(y | \mathcal{D}_n, \bm{s}, t_{n})$ is the posterior predictive distribution at $(\bm{s}, t_n)$.

Although standard computation of expected information gain like \eqref{eqn:vanilla-AL} involves a computationally prohibitive cost \citep{ryan2016review}, we can compute our expected utility efficiently here using the properties of Gaussian distributions' entropies as well as the symmetry of mutual information. 

In particular, the posterior predictive distribution of a GP over a spatio-temporal grid, i.e. $p(f(R \times \mathcal{T})|\mathcal{D})$, is a multivariate Gaussian and its entropy is directly computable analytically from the distribution's covariance matrix. Furthermore, this entropy has no dependency on the observation value, thus the expected entropy is identical to the entropy itself. Finally, using the symmetry of mutual information, one can obtain an equivalent formulation of the information gain in \eqref{eqn:vanilla-AL} that is computationally cheaper than its original form. We remark that such a reformulation using mutual information symmetry has been applied to entropy search \citep{hennig2012entropy} to yield the more cost-efficient predictive entropy search \citep{hernandez2014predictive}. 

Overall, we have \begin{equation}\label{eqn:MI-AL}
\begin{split}
    \bm{s}_n^* &= \argmax_{\bm{s} \in R} \E_{y \sim p(y | \mathcal{D}_n, \bm{s}, t_n)} [ IG(y)] \\
    &= \argmax_{\bm{s}\in R} \log \det \left( I + \sigma_\text{obs}^2 K(X^{+\bm{s}}_n)\right)     
\end{split}
\end{equation}
where $X^{+\bm{s}}_n = X_n \cup (\bm{s}, t_n)$ is the full observation points after the hypothetical evaluation at $(\bm{s}, t_n)$ and $K(X^{+\bm{s}}_n)$ is the Gram matrix of kernel $k_\text{tHelm}$ between the full observation points. The details of the above reformulation can be found in Section \ref{sec:reformulation-EIG}. 

\subsection{The Pitfall of Standard Active Learning for Lagrangian Observers} \label{sec:pitfall}

Standard active learning, as formulated in \eqref{eqn:standard_AL}, is inadequate for Lagrangian observers. Recall from Section \ref{sec:introduction} that our placed observers will continuously measure at different locations and times while being advected by the underlying vector field. This property, however, is ignored in the utility computation (e.g. \eqref{eqn:vanilla-AL} and \eqref{eqn:MI-AL}, where only the initial observation location is considered). Thus, standard active learning is suboptimal, as stated in Proposition \ref{prop:pitfall_naive_AL}, which we formalise and prove in Section \ref{sec:proof_pitfall_prop}.

\begin{prop}[Informal] \label{prop:pitfall_naive_AL}
    For sequential experimental design where observers are Lagrangian, standard utility construction yields suboptimal decisions.
\end{prop}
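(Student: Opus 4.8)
\emph{Proof plan.} Since the statement is informal, the first job is to make it precise. I would fix a sequential-design instance: the kernel $k_{\text{tHelm}}$, a noise level $\sigma_{\text{obs}}$, a placement time $t_n$, accumulated data $\mathcal{D}_n=\{X_n,\bm y_n\}$, and --- for concreteness --- a deterministic advecting field $v$ that transports a drifter released at $\bm s$ at time $t_n$ along a flow curve $\gamma_{\bm s}$ with $\gamma_{\bm s}(t_n)=\bm s$. Releasing at $\bm s$ then produces not one datum but the whole trajectory batch $X^{\mathrm{traj}}_n(\bm s)=\{(\gamma_{\bm s}(t),t):t\in\mathcal T\cap[t_n,T]\}$. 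By the same mutual-information argument behind \eqref{eqn:MI-AL}, the genuine one-step expected information gain of releasing at $\bm s$ equals $U_{\mathrm{traj}}(\bm s)=\log\det(I+\sigma_{\text{obs}}^{2}K(X_n\cup X^{\mathrm{traj}}_n(\bm s)))$ up to an $\bm s$-independent constant, whereas the standard criterion \eqref{eqn:MI-AL} maximises $U_{\mathrm{std}}(\bm s)=\log\det(I+\sigma_{\text{obs}}^{2}K(X_n\cup\{(\bm s,t_n)\}))$, discarding every point of the trajectory except the release point. I would then phrase suboptimality as the existence of an admissible instance and two candidates $\bm s_A,\bm s_B\in R$ with $U_{\mathrm{std}}(\bm s_A)>U_{\mathrm{std}}(\bm s_B)$ yet $U_{\mathrm{traj}}(\bm s_B)>U_{\mathrm{traj}}(\bm s_A)$: the standard policy then deploys $\bm s_A$, a strictly dominated choice, which already establishes suboptimality of the decision rule.

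\emph{The counterexample.} The construction exploits the separability $k_{\text{tHelm}}((\bm s,t),(\bm s',t'))=k_{\text{Helm}}(\bm s,\bm s')\,k_{\text{time}}(t,t')$ together with advection to pull apart ``informative at release'' from ``informative along the trajectory''. Choose $X_n$ to densely cover a sub-region $A$ of the domain and leave a far-away region $B$ unobserved. Put $\bm s_A$ in an unobserved pocket but in a near-stagnant part of $v$, so $\gamma_{\bm s_A}(t)\approx\bm s_A$ throughout $\mathcal T\cap[t_n,T]$ and, since $k_{\text{time}}\approx1$ on the short horizon, the trajectory batch is essentially a repeated near-duplicate of the release measurement; then $U_{\mathrm{traj}}(\bm s_A)\approx U_{\mathrm{std}}(\bm s_A)$. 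Put $\bm s_B$ at the edge of $A$ so its release point is strongly correlated with $X_n$ and carries little marginal information, but on a fast streamline of $v$ that sweeps the drifter through all of $B$ within the next few time steps. With the spatial lengthscale of $k_{\text{Helm}}$ short relative to $\mathrm{diam}(B)$, the trajectory points in $B$ are jointly near-independent of everything seen so far, so $U_{\mathrm{traj}}(\bm s_B)$ scales like the number of time steps spent in $B$ times a per-point information increment. Taking the horizon long enough (or $v$ fast enough) makes $U_{\mathrm{traj}}(\bm s_B)>U_{\mathrm{traj}}(\bm s_A)$ while the single-point comparison is untouched, so $U_{\mathrm{std}}(\bm s_A)>U_{\mathrm{std}}(\bm s_B)$ still holds. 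I would make the two inequalities rigorous by a limiting argument --- driving the relevant lengthscale ratios to their extremes so that the Gram matrices block-diagonalise up to $o(1)$ and the $\log\det$'s become explicit sums --- rather than by exact matrix arithmetic.

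\emph{Main obstacle.} The subtle point is self-consistency. In BALLAST the advecting field is itself (a posterior draw of) the object of interest, so the trajectory is random given $f$ and the true utility is an expectation over that randomness, not the deterministic $U_{\mathrm{traj}}$ above. For the proposition it suffices to exhibit suboptimality on \emph{one} admissible instance, so I would first prove it with a fixed admissible $v$ --- e.g.\ a fixed realisation of the Helmholtz GP, or simply the posterior-mean field the algorithm actually uses --- checking it lies in the model class, and then observe that the strict inequality persists under small perturbations of $v$, hence under averaging against a trajectory law concentrated near it. The remaining work is bookkeeping: to control the $\log\det$ comparison cleanly I would stay in the short-lengthscale / near-independent regime where the Gram matrices are block diagonal up to vanishing corrections, so that $\log\det(I+\sigma_{\text{obs}}^{2}K(\cdot))$ splits additively and the comparison reduces to counting informative trajectory points --- an argument robust to the precise kernel shape and to whether one uses the exact or the discretised time grid.
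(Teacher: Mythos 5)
Your proposal is correct in spirit but takes a genuinely different---and substantially more ambitious---route than the paper. The paper's own formalisation defines a Lagrangian utility $LU$ that accounts for the whole trajectory, sets $x^*=\argmax_{x}LU(x)$, and then ``proves'' suboptimality of the standard choice $x^S$ by observing that $LU(x^S)\le LU(x^*)$ because $x^*$ is by definition the maximiser. That is a one-line tautology: the same weak inequality holds for \emph{any} decision rule, including the optimal one, so the paper's proof does not actually distinguish the standard policy from anything else. You instead interpret ``suboptimal'' in the meaningful sense---exhibit an admissible instance and two candidates $\bm s_A,\bm s_B$ with $U_{\mathrm{std}}(\bm s_A)>U_{\mathrm{std}}(\bm s_B)$ but $U_{\mathrm{traj}}(\bm s_B)>U_{\mathrm{traj}}(\bm s_A)$---which yields a \emph{strict} gap and genuinely justifies the claim that ignoring advection costs something. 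Your counterexample mechanism (a stagnant release point in an unobserved pocket versus a well-covered release point on a fast streamline sweeping through unexplored territory, with Gram matrices block-diagonalising in the short-lengthscale limit so the $\log\det$ splits additively) is plausible and matches the empirical behaviour the paper reports in Figure~\ref{fig:deployment-comp}; what remains to make it airtight is carrying out the limiting argument so the two inequalities are verified rather than asserted, and checking the chosen field $v$ is realisable within the model class. The trade-off: the paper's argument is complete but vacuous; yours requires real bookkeeping but, once finished, proves something the paper's proof does not.
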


As shown in Figure \ref{fig:deployment-comp}, the EIG policy places observers near the border, which would often leave the considered region quickly and yield few observations. Numerical experiments in Section \ref{sec:experiment} even suggest EIG is consistently worse than the uniform policy. 

\section{BALLAST} \label{sec:BALLAST}

\begin{figure*}[t]
    \centering
    \includegraphics[width=0.9\linewidth]{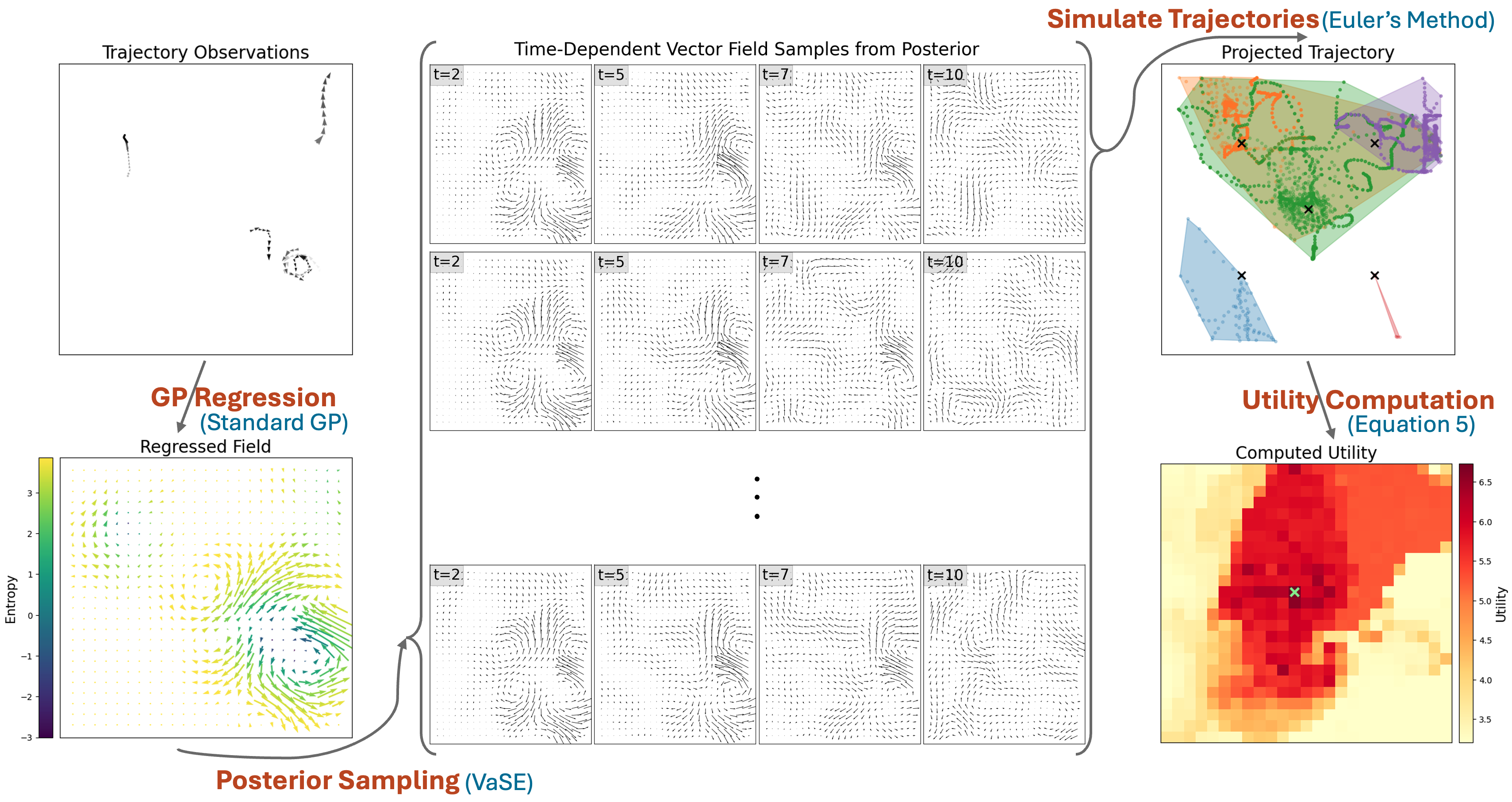}%
    \caption{The schematic diagram illustrating the BALLAST algorithm for active learning. Given existing observations (top left), we first regress them using a GP (bottom left) and draw multiple samples from the posterior GP (middle). Hypothesised observation trajectories from candidate placements are simulated using sampled fields (top right), which are aggregated for utility computation (bottom right) to select the optimal deployment location (green cross in the bottom right plot).}
    \label{fig:BALLAST-plot}
\end{figure*}

To faithfully measure the utility of a placed drifter, we propose \textbf{BALLAST}: Bayesian Active Learning with Look-ahead Amendment for Sea-drifter Trajectories, a novel algorithm that adjusts the utility computation via look-aheads using vector fields sampled from posteriors. 

Intuitively, when estimating the utility of a placement, we aim to capture the subsequent observations made by the observer. To do so, we simulate the future trajectory of a placed observer until the terminal time using vector fields sampled from the posterior as proxies for the ground truth. 

For any utility function $U$, the BALLAST-aided acquisition function is provided by \begin{equation} \label{eqn:BALLAST-AL-v0}
    \bm{s}^*_{n} = \argmax_{\bm{s} \in R} \E_{F \sim p(f|\mathcal{D}_n)} \left[ \E \left[  U(P^T_F(\bm{s}, t_n)) \right] \right] 
\end{equation}
where $P^T_F(\bm{s}, t_n)$ denote the projected trajectory until terminal time $T$ of an object in vector field $F$ initialised at location $\bm{s}$ and time $t_n$, and $F$ is a sampled (time-varying) vector field from the posterior distribution $f|\mathcal{D}_n$. Note that we will only use observations within the considered spatial region, and terminate the observers when they leave it. 

To compute the acquisition in \eqref{eqn:BALLAST-AL-v0}, we approximate the outer expectation over $F \sim p(f | \mathcal{D}_n)$ using the Monte Carlo method \citep{robert1999monte} by taking $J$ draws $F^{(1)}, F^{(2)}, \ldots, F^{(J)} $ from the posterior $p(f| \mathcal{D}_n)$ and computing the integrand individually. The choice of $J$ is a key tuning parameter, which we pick $J = 20$ as the default following our ablation study's result in Sections \ref{sec:experiment-ablation} and \ref{sec:ablation}. 

Each projected trajectory $P^T_{F^{(j)}}(\bm{s}, t_n)$ with candidate placement location $\bm{s} \in R$ and sample field $F^{(j)}$ is a collection of observation locations and times that can be obtained using a numerical ODE solver (e.g. Euler's method, \citet{suli2003introduction}) by iteratively updating the locations with a stepsize $\delta_t$ using velocities of the vector field $F$. Specifically, the velocity at a spatial location will be that of the grid cell containing the location. We would also project existing observers at locations $\bm{s}_\text{existing}$ to obtain trajectories $P^T_{F^{(j)}}(\bm{s}_\text{existing}, t_n)$. We denote $P^T_{F^{(j)}}(\bm{s}_\text{ag})$ to be the aggregated additional observations after placing an observer at $\bm{s}$ under the sample field $F^{(j)}$. 

BALLAST amendment is compatible with any utility function. Here, we will consider the special case of the information gain utility function, which yields the following BALLAST-aided acquisition function \begin{equation} \label{eqn:BALLAST-AL-v1}
\begin{split}    
    \bm{s}^*_{n} &= \argmax_{\bm{s} \in R} \E_{F \sim p(f|\mathcal{D}_n)}\\
    &\qquad \qquad \left[  \log\det \left( I + \sigma_\text{obs}^2K\left( X_n^{+P^T_F(\bm{s}_\text{ag})}\right) \right) \right] \\ 
    &\approx \argmax_{\bm{s} \in R}  \frac{1}{J} \sum_{j = 1}^J \\
    &\qquad \qquad \left[  \log\det \left( I + \sigma_\text{obs}^2K\left( X_n^{+P^T_{F^{(j)}}(\bm{s}_\text{ag})} \right) \right) \right] \\ 
\end{split}
\end{equation}
with $F^{(1)}, \ldots, F^{(J)} \sim f| \mathcal{D}_n$ being the sampled posterior vector fields, $K(\cdot, \cdot)$ denoting the Gram matrix with kernel $k_\text{tHelm}$ and $X_n^{+P^T_{F^{(j)}}(\bm{s})} := X_n \cup P^T_{F^{(j)}}(\bm{s})$ be the full observation points under sample field $F^{(j)}$.

Close inspections of \eqref{eqn:BALLAST-AL-v1} indicate the computational bottleneck of the optimisation is the posterior sampling of vector fields $F^{(1)}, \ldots, F^{(J)}$ over the full spatio-temporal grid. Standard GP posterior sampling scales cubically in test points as it involves computing the matrix square root of a Gram matrix. Here, our test points consist of the full spatio-temporal grid of size $N_\text{samp} = N_\text{space}N_\text{sampT}$ -- with $N_\text{space}$ spatial points and $N_\text{sampT}$ temporal points -- commonly of the magnitude $10^5$ or higher, making standard sampling computationally impossible. In Section \ref{sec:sampling-posterior-SPDE} below, we propose a novel GP inference method to overcome this bottleneck.

\subsection{Vanilla-SPDE Exchange} \label{sec:sampling-posterior-SPDE}

To solve the challenge of posterior sampling for BALLAST, we propose the \textbf{Vanilla SPDE Exchange} (VaSE), a novel GP inference method that synergises the computational benefits of the standard GP and SPDE \citep{solin2016stochastic} frameworks, which is also of independent interest. 

Under the SPDE-GP re-formulation, a 1D Mat\'{e}rn GP can be cast as the stationary solution of a linear SDE, which allows inference using the Kalman filter and Rauch-Tung-Striebel (RTS) smoother that costs linearly in time \citep{hartikainen2010kalman}. When the GP has a separable spatial kernel in addition to the Mat\'{e}rn temporal kernel, the same tools can be applied \citep{sarkka2013spatiotemporal} and cost linearly in time and cubically in space. See Section \ref{sec:SPDE-GP-details} for more details. 

While the SPDE approach of \citet{sarkka2013spatiotemporal} offers a computationally efficient re-formulation for spatio-temporal GPs, the method is not suitable for situations where the observation locations and test locations are minimally overlapping. Since the spatial component of the GP is included in the SPDE using the Gram matrix of all considered locations, near-distinct observation and test locations yield a huge Gram matrix and cost cubically in size. As our Lagrangian observations are non-gridded, predicting on gridded test locations would thus yield prohibitive extra computational cost, making any potential speed-up of SPDE-GP futile. Details of this cost analysis can be found in \ref{sec:SPDE-comp-costs-AL}. 

VaSE, on the other hand, uses standard GP regression to bypass the need to consider observation locations for SPDE-GP, so as to achieve efficient posterior sampling. We consider the extended GP $\bm{f} = [f, \partial_t f]^T$ with $f \sim \mathcal{GP}(0, k_\text{tHelm})$ and regress the observations with it. In particular, using the properties of kernels under linear operators \citep{agrell2019gaussian}, the extended GP has kernel \[
\begin{split}
&\text{Cov}((\bm{s},t), (\bm{s}',t')) =\\
&\begin{bmatrix}
    k_\text{tHelm}((\bm{s},t), (\bm{s}',t')) & \partial_{t'}k_\text{tHelm}((\bm{s},t), (\bm{s}',t'))  \\
    \partial_{t} k_\text{tHelm}((\bm{s},t), (\bm{s}',t'))  & \partial^2_{tt'} k_\text{tHelm}((\bm{s},t), (\bm{s}',t'))
\end{bmatrix}.
\end{split}
\]
Since $k_\text{tHelm}$ is separable, the partial derivatives are only w.r.t. the Mat\'{e}rn temporal kernels.

To sample vector fields for BALLAST at time $t_n$ with observations $\mathcal{D}_n$ using VaSE, we first use the extended posterior distribution $\bm{f}|\mathcal{D}_n$ to generate the SPDE initial condition by drawing from $\bm{f}(R, t_n)|\mathcal{D}_n$, then propagate the initial condition until terminal time $T$ using the state space model. Full details of VaSE can be found in Section \ref{sec:vase}.

\begin{table*}
    \centering
    \begin{tabular}{|c|c|}
    \hline
         \textbf{Method}& \textbf{Cost}  \\ \hline
         \text{Standard}& $O(N_\text{obs}^3 + N_\text{pred,s}^3 N_\text{pred,t}^3+N_\text{obs}^2 N_\text{pred,s}N_\text{pred,t} + N_\text{obs}N_\text{pred,s}^2N_\text{pred,t}^2))$ \\\hline
         \text{SPDE}& $O((N_\text{obs} + N_\text{pred,s})^3 N_\text{obs,t} + N_\text{pred,s}^2 N_\text{pred,t})$ \\\hline
         \text{VaSE}& $O(N_\text{obs}^3 + N_\text{pred,s}^2 N_\text{obs} + N_\text{pred,s} N_\text{obs}^2 + N_\text{pred,s}^2 N_\text{pred,t})$ \\\hline
    \end{tabular}
    \caption{Computational cost comparison of standard, SPDE, VaSE for drawing one posterior sample. We highlight the dominating terms $N_\text{pred,s}^3 N_\text{pred,t}^3$ for standard and $(N_\text{obs} + N_\text{pred,s})^3 N_\text{obs,t}$ for SPDE that make the two traditional methods drastically more expensive than VaSE. Full details in Section \ref{sec:vase-cost-comparison}.}
    \label{tab:cost_comp}
\end{table*}

The computational costs of drawing one posterior sample using the three approaches (standard GP, SPDE-GP, vanilla-SPDE exchange) are compared in Table \ref{tab:cost_comp}, where $N_\text{obs}$ ($\approx 200$) denotes the observation number, $N_\text{obs,t}$ ($\approx 200$) denotes the number of distinct observation times, $N_\text{pred,s}$ ($\approx 500$) and $N_\text{pred,t}$ ($\approx 1000$) denotes the number of prediction spatial and temporal points. Using the approximate values, we can notice that VaSE is of the cost order $10^8$, significantly cheaper than the $10^{11}$ cost of SPDE and $10^{17}$ cost of standard GP. 

We also measured the wall run times of VaSE on a consumer laptop equipped with an Apple Silicon CPU and 24 GB RAM. In the same setup as that of Section \ref{sec:experiment-synthetic}, drawing one posterior sample with VaSE takes under 4 seconds (3.77s at decision time t = 2.5, 3.89s at t = 5.0, 3.64s at t = 7.5), whereas SPDE takes around 4.5 minutes (245.77s at t = 2.5, 259.83s at t = 5.0, 281.75s at t = 7.5). This additional comparison provides further evidence that VaSE is noticeably more efficient than the state-of-the-art SPDE approach for the kinds of sampling tasks considered in the paper.

\subsection{BALLAST Algorithm} \label{sec:ballast-algorithm-statement}

The BALLAST-aided active learning of time-dependent vector fields with Lagrangian observers using the expected information gain utility and a temporal Helmholtz GP surrogate is presented informally as Algorithm \ref{alg:BALLAST}, presented formally in Section \ref{sec:ballast_alg_full}, and visually represented as Figure \ref{fig:BALLAST-plot}. 

The computational cost of one iteration of the BALLAST-EIG at time $t_n$ given $N_\text{obs}$ observations, $N_\text{space}$ spatial grid locations, and $N_\text{sampT}$ sampled time slices, is \[
\begin{split}
    O &\Vast( \underbrace{\color{blue}J}_{\#\text{Samples}}\vast(\underbrace{N_\text{obs}^3 + N_\text{obs}N_\text{space}^2}_{\text{Sample Field at $t_n$}} + \underbrace{N_\text{sampT} N_\text{space}^2}_{\text{Propagate Field}} \\
    &+ {\color{blue}N_\text{space}} \left[\underbrace{N_\text{sampT}}_{\text{Simulate Traj.}} + \underbrace{(N_\text{sampT}+N_\text{obs})^3}_{\text{Compute Utility}}\right] \vast)\Vast)
\end{split}
\]
where the blue indicates the complexity that can be reduced using parallelization. We also remark that under the setup considered in Section \ref{sec:experiment-synthetic}, it consistently takes under 3 minutes (wall time, measured on a consumer laptop equipped with an Apple Silicon CPU and 24 GB RAM) to make a deployment decision with our recommended posterior sample number $J = 20$. In particular, at decision time $t=3.0$, it took 2 min 45 s; at decision time $t=5.0$, it took 1 min 55 s; at decision time $t=7.0$, it took 2 min 13 s.

Our proposed Algorithm \ref{alg:BALLAST} builds on a separable, spatio-temporal GP surrogate with the temporal kernel being Mat\'{e}rn for the execution of the SPDE sampling procedure described in Section \ref{sec:sampling-posterior-SPDE}. This could be weakened following the work of \citet{solin2016stochastic} on constructing SPDE formulations for a broader range of kernels. However, the core BALLAST mechanism of trajectory projection is compatible with any utility choice and active learning surrogate models. We also remark that we initialise the algorithm with a uniformly drawn location for generality and simplicity.

\begin{algorithm}[tb]
\caption{BALLAST-EIG Active Learning of Lagrangian Observers (Informal)}\label{alg:BALLAST}
\begin{algorithmic}[1]
\STATE {\bfseries Input:} Deployment number $M$. BALLAST sample number $J$. Temporal Helmholtz GP $f$. Spatial grid $R$. Terminal time $T$. 
\STATE Initialise an observer randomly at time $t_0 = 0$.
\FOR {$m = 1, 2, \ldots, M$}
    \STATE Optimise GP hyperparameters using observations.
    \FOR {{$j = 1, 2, \ldots, J$} {\color{blue} (\textit{parallelizable})}}
        \STATE Sample posterior field at deployment time $t_m$ using standard GP regression. 
        \STATE Propagate sampled field until terminal time $T$ using the SPDE approach.
        \STATE Simulate trajectories of existing observers. 
        \FOR {{$\bm{s} \in R$} {\color{blue} (\textit{parallelizable})}}
            \STATE Simulate the trajectory starting at $\bm{s}$.
        \ENDFOR
    \ENDFOR
    \STATE Aggregate the utility contributions from $J$ samples to obtain the next placement $\bm{s}^*_m$ via \eqref{eqn:BALLAST-AL-v1}.
    \STATE Initialise an observer at $\bm{s}^*_m$. 
\ENDFOR
\end{algorithmic}
\end{algorithm}

\section{Experiments} \label{sec:experiment}

After an ablation study empirically analysing the tuning parameter choice of the BALLAST policy, we investigate the effectiveness of BALLAST for Lagrangian observer placement under ground truth generated by the temporal Helmholtz GP surrogate and under the high-fidelity Stanford Unstructured Nonhydrostatic Terrain-following Adaptive Navier–Stokes Simulator (SUNTANS) \citet{fringer2006unstructured}. 

Six active learning policies are compared: uniform (\textbf{UNIF}), Sobol (\textbf{SOBOL}), distance-separation (\textbf{DIST-SEP}) heuristic inspired from \citet{chen2024launching}, \textbf{EIG} of \eqref{eqn:MI-AL}, and BALLAST of \eqref{eqn:BALLAST-AL-v1} with optimised and true hyperparameters (denoted \textbf{BALLAST-opt} and \textbf{BALLAST-true}). The Sobol sequence is chosen to represent space-filling-inspired policies such as \citet{tukan2024efficient}. 

The deployment policy proposed by \citet{chen2024launching} works under the Lagrangian data assimilation inference framework, and considers two criteria: (1) ``the drifters are deployed at locations where they can travel long distances'', and (2) ``place the drifters at locations that are separate from each other''. As we are working under a different inference framework, we adapt their policy and compute the criteria using GP posteriors and BALLAST samples, which gives us the DIST-SEP policy. Implementation details of all considered policies can be found in Section \ref{sec:considered_policies}, and the Github repo for the codes used in the experiments can be found at \url{https://github.com/ShuSheng3927/BALLAST}.

In general, we have observed consistently superior performance of BALLAST policies against other considered policies, with the Sobol policy being comparable to BALLAST-opt (but worse than BALLAST-true) in one setting. Additionally, the advantage of BALLAST over other policies increases as more observers are deployed. 

\subsection{Ablation Study} \label{sec:experiment-ablation}

To determine a suitable choice of BALLAST sample number $J$ of Algorithm \ref{alg:BALLAST}, we conduct an ablation study investigating the change in utility gap of the placement decision as $J$ increases. While a large number of samples is usually used to accurately approximate integrals for standard Monte Carlo methods, since our goal is to find the maximising location $\bm{s} \in R$ of the expected utility, a small number of $J$ is often sufficient, as we will see below. 

We consider a synthetic ground truth vector field generated by a temporal Helmholtz model (same as Section \ref{sec:experiment-synthetic}), and the full deployment duration is $[0,10]$. Three different decision times $t = 3, 5, 7$ are considered, where uniformly placed drifters are initially placed every $0.5$ time prior to the decision. At each decision time, the true expected utility is approximated using $J=200$, and the percentage utility gap between the optimal decision under $J = 1, 2, \ldots, 200$ against the optimal decision under $J = 200$ is calculated. 

\begin{figure}
    \centering
    \includegraphics[width=\linewidth]{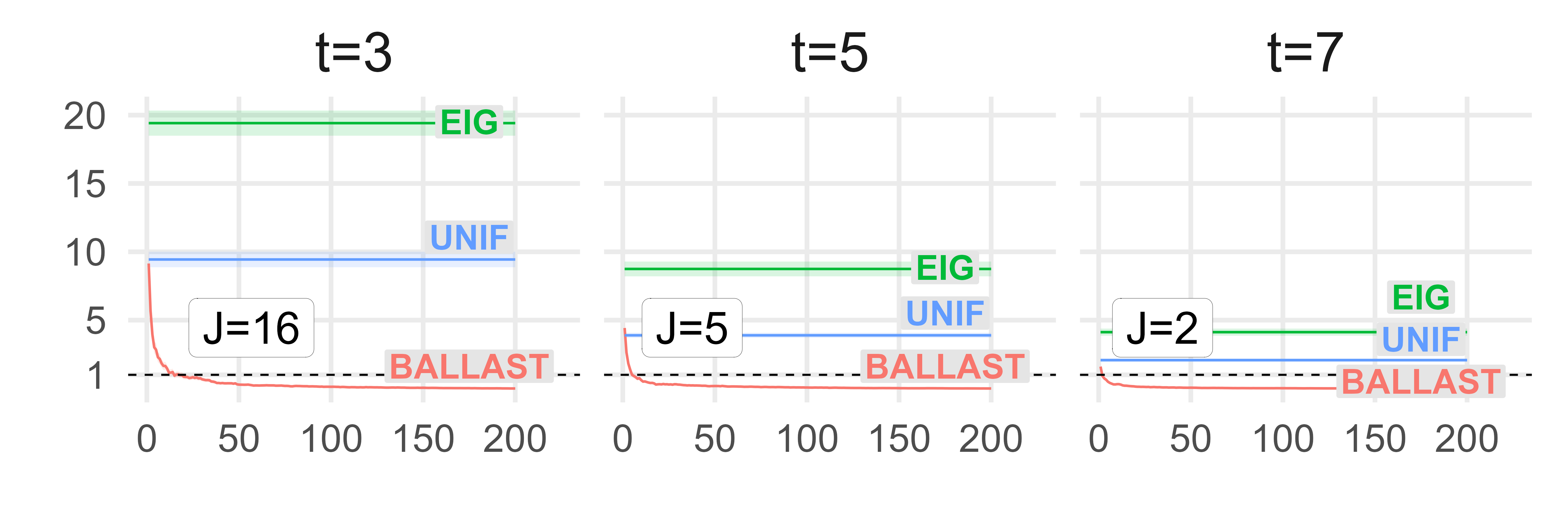}%
    \caption{Percentage utility gap with 2 standard error bounds of Uniform, EIG, and BALLAST over posterior sample number $J$ at decision times $t = 3, 5, 7$. A percentage utility gap cut-off at $1\%$ is selected with corresponding $J$ values in text.}
    \label{fig:ablation_J}
\end{figure}

In Figure \ref{fig:ablation_J}, BALLAST reached the $1\%$ utility gap before $J = 20$ consistently. Also, BALLAST decisions are consistently better than the uniform and EIG decisions for almost all choices of $J$, with EIG worse than Uniform -- aligning with our observation in Figure \ref{fig:deployment-comp}. Full details of this ablation study and additional ablations can be found in Section \ref{sec:ablation}. 

\begin{figure}
    \centering
    \includegraphics[width=\linewidth]{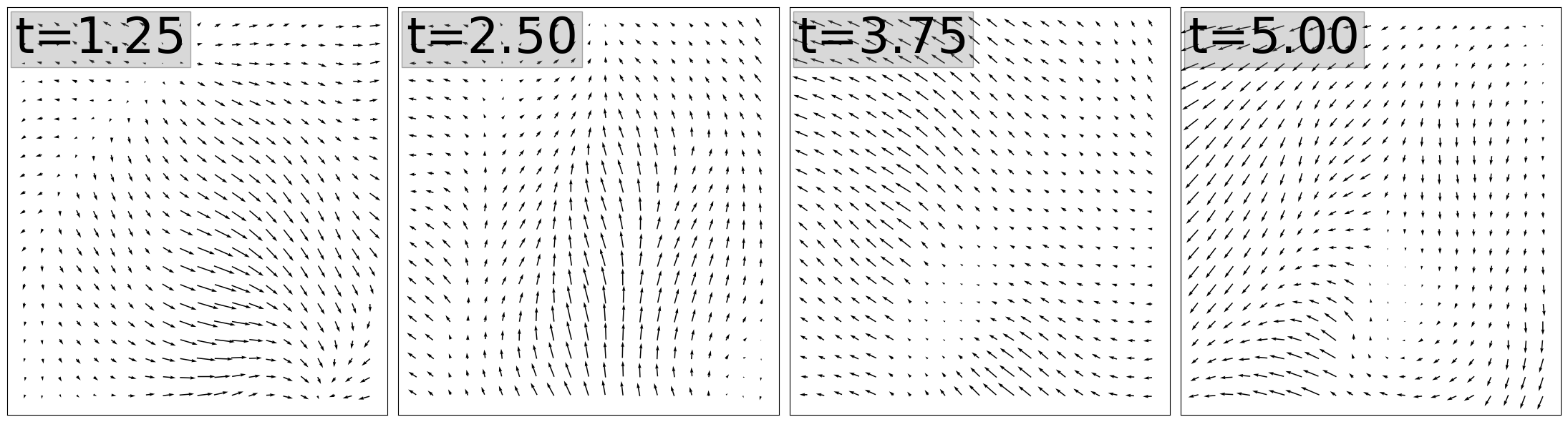}%
    \caption{Vector fields at selected time slices of the SUNTANS dataset of \citet{rayson2021seasonal}.}
    \label{fig:suntans_data}
\end{figure}

\subsection{Temporal Helmholtz Ground Truth} \label{sec:experiment-synthetic}

\begin{figure*}[t]
    \centering
    \includegraphics[width=0.8\linewidth]{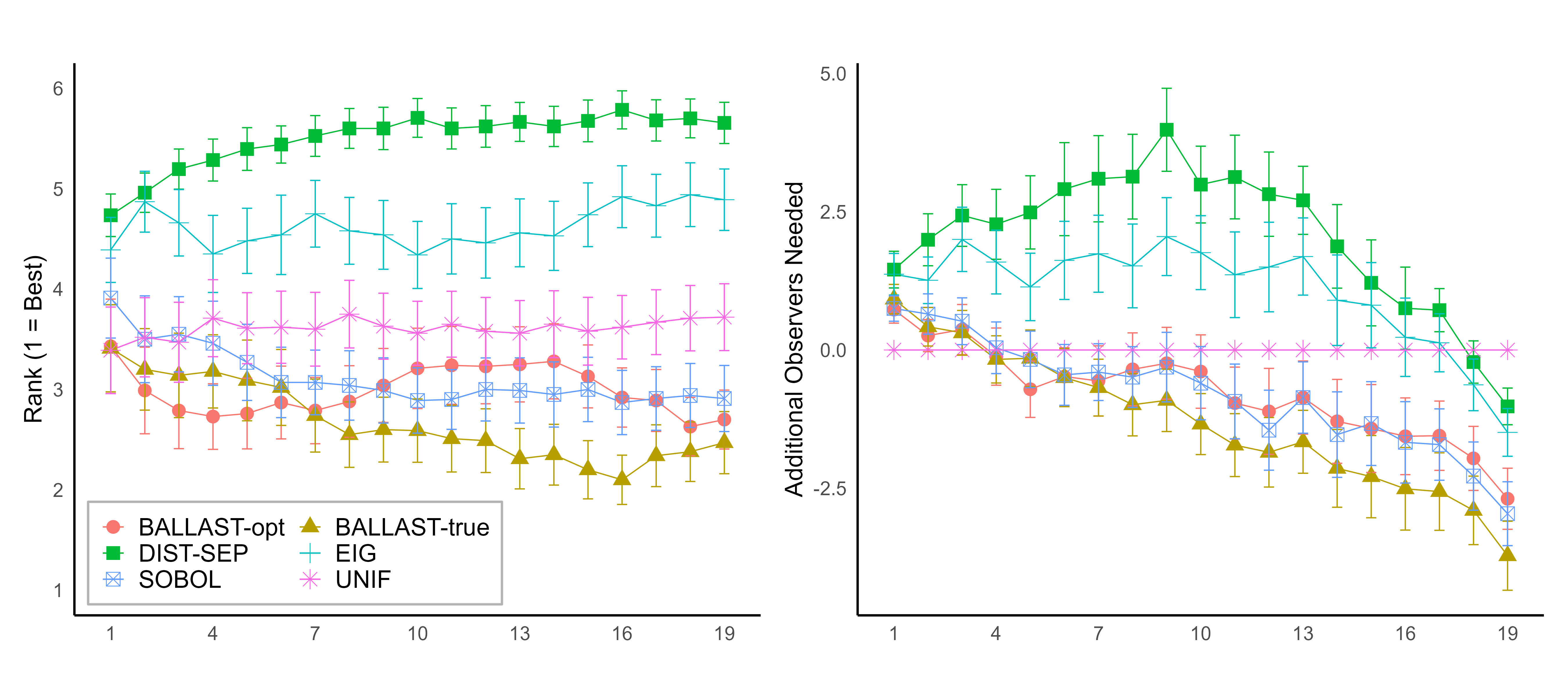}%
    \vspace{-2em}
    \caption{Policy comparison with temporal Helmholtz ground truth. Left is the average policy rank over iterations at each deployment time, with 2 standard errors. Right is the iso-performance over iterations with 2 standard errors.}
    \label{fig:policy-comparison-synthetic}
\end{figure*}

\begin{figure*}[t]
    \centering
    \includegraphics[width=0.8\linewidth]{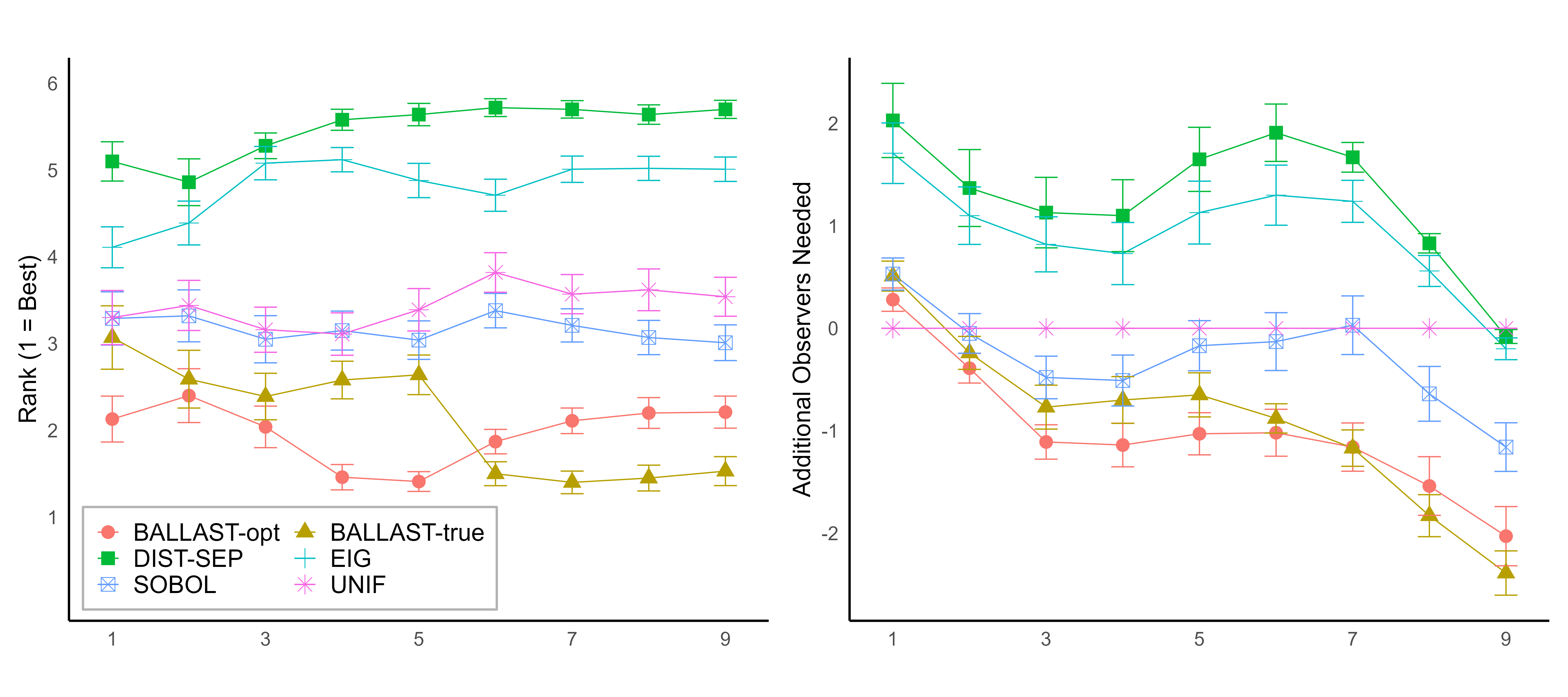}%
    \vspace{-2em}
    \caption{Policy comparison with SUNTANS ground truth. Left is the average policy rank over iterations at each deployment time, with 2 standard errors. Right is the iso-performance over iterations with 2 standard errors.}
    \label{fig:policy-comparison-suntans}
\end{figure*}

Here, the ground truth vector field is drawn from a temporal Helmholtz GP described in Section \ref{sec:gp-model} where the temporal kernel is a Mat\'{e}rn $3/2$ with lengthscale $2.5$ and variance $1.0$, and the Helmholtz kernel with two RBF kernels of variance $0.5$ and lengthscales $0.8$ and $0.5$ for potential and stream kernel respectively. The ground truth field is considered on a time grid $[0,10]$ with time step $0.01$ and a spatial grid of size $25 \times 25$ evenly-spread on $[-2, 2] \times [-2, 2]$. 

All policies (except SOBOL) are initialised uniformly at time zero, and 19 further observers are deployed every $0.5$ unit of time afterwards. The performance is measured by the average L2 error of the vectors of the posterior predictive mean field over the spatial grid and the full set of deployment times. For the experiment, 10 different sampled ground truth vector fields with 10 independent runs each are conducted. We compare the policies using the average policy rank and the iso-performance with the uniform policy as the benchmark. The policy rank considers the ranks of policies (one being the best) for each iteration, and the iso-performance considers the additional (positive or negative) number of observers needed to reach the same level of performance, averaged over each iteration's results. 

The result in Figure \ref{fig:policy-comparison-synthetic} indicates that BALLAST with true or optimised hyperparameters consistently outperforms all other policies, except for the Sobol sequence, which is worse than BALLAST-true and comparable with BALLAST-opt. At the end of the deployment, the two BALLAST policies save about 3 drifters against the uniform benchmark, which yields around $16\%$ deployment cost saving. 

\subsection{SUNTANS Ground Truth} \label{sec:experiment-suntans}

The SUNTANS model is a high-fidelity numerical fluid mechanics model for non-hydrostatic flows \citep{fringer2006unstructured} and internal waves \citep{walter2012nearshore}. Here, we use a (spatial and temporal) portion of the simulated, open-sourced vector fields from \citet{rayson2021seasonal} as the ground truth vector field -- see Figure \ref{fig:suntans_data} for an illustration. A temporal Helmholtz GP continues to be the surrogate model of choice here for the active learning policies.

For EIG and BALLAST-true policies in particular, as they require pre-specified hyperparameter values for the GP, we set the temporal kernel to be of lengthscale 1 and variance 15, the potential component of the Helmholtz kernel to be an RBF with lengthscale 5 and variance 20, and the stream component to be an RBF with lengthscale 4 and variance 0.01. Those hyperparameter choices are selected based on estimations using ground truth observations. 

The considered spatial region is $21 \times 21$ with (mildly) uneven grid, and the time horizon is $[0, 5]$ with time step $0.01$. A uniformly drawn initial observer is placed, followed by $9$ additional observers using the different policies. A hundred runs with different initial seeds are conducted, with their performance measured in the same way as before. 

The result in Figure \ref{fig:policy-comparison-suntans} indicates that BALLAST-true and BALLAST-opt consistently outperform other policies with noticeable margins. At the end of the deployment, the two BALLAST policies save around 2 drifters against the benchmark, which yields about $22\%$ cost saving. We also notice DIST-SEP performing worse than EIG both here and above. As this heuristic was proposed as an approximation to information gain (see Section 4.2 of \citet{chen2024launching}), we find DIST-SEP's underperformance unsurprising. 

\section{Conclusion} \label{sec:conclusion}

We apply active learning to the Lagrangian observer deployment for spatio-temporal vector fields. After noticing the inadequacy of standard methods, we introduce BALLAST to sample hypothesised vector fields and simulate potential observation trajectories for better utility measurement. We also propose VaSE to speed up computation, which could be of independent interest. Finally, our numerical experiments provide promising results on our method's effectiveness.

\paragraph{General Applicability} While BALLAST is motivated by active learning with drifters, the proposed method can be extended to other scenarios. The deployment of collar sensors for animal movement \citep{handcock2009monitoring} and balloon sensors for meteorological data \citep{wang2020high} are examples where a BALLAST-style policy can be applied. Our proposed method also contributes to the growing literature of sequential designs with search constraints \citep{folch2024transition, mutny2023active,qing2024system}.

\paragraph{Possible Extensions} One extension is to employ other surrogate models. Within the Gaussian process model class, there exist other physics-informed models \citep{hamelijnck2021spatio, hamelijnck2024physics, xu2024hhd}. Additionally, one could also consider deep adaptive designs \citep{foster2021deep,huang2024amortized,iqbal2024nesting} to amortise the acquisition optimisation for faster decisions at deployment. 

\section*{Impact Statement}

This work presents an algorithm to optimise the placement of oceanographic drifters, with the primary aim of improving environmental monitoring of ocean dynamics. Enhanced observation of currents can contribute to more accurate climate models, a better understanding of marine ecosystems, and improved responses to environmental hazards such as oil spills or extreme weather events. Meanwhile, we acknowledge that similar methodologies could also be applied in industrial contexts, including oil and gas exploration and operations.

\section*{Acknowledgements}
RZ is supported by EPSRC-funded STOR-i Center for Doctoral Training (grant no. EP/S022252/1). RZ, LA and EC are supported by the ARC ITRH for Transforming energy Infrastructure through Digital Engineering (TIDE), Grant No. IH200100009. RZ thanks Ben Lowery for the help with accessing the compute cluster, as well as William Laplante, Adrien Corenflos, and Matthias Sachs for discussions on SPDE-GP.

\bibliography{icml2026}
\bibliographystyle{icml2026}

\newpage 

\appendix

\section{Full BALLAST Algorithm} \label{sec:ballast_alg_full}

\begin{algorithm}[h!]
\caption{BALLAST-EIG Active Learning of Lagrangian Observers}\label{alg:BALLAST_full}
\begin{algorithmic}[1]
\STATE {\bfseries Input:} Spatial grid $R$. Terminal time $T$. Stepsize $\delta_t$. Temporal Helmholtz GP $f$ with kernel hyperparameter $\theta$ and its extension $\bm{f} = [f, \partial_tf]^T$. Deployment number $M$. BALLAST sample number $J$. ODE solver of choice (e.g. Euler's method).
\STATE Initialise a Lagrangian observer randomly in $R$ at time $t_0 = 0$. 
\FOR {$m = 1, 2, \ldots, M$}
    \STATE Denote collected observations as $\mathcal{D}_m = \{X_m, y_m\}$ and set the time as $t_m = t_{m-1} + \delta_t$. 
    \STATE Estimate kernel hyperparameter $\theta$ of $\bm{f}$ and observation noise $\sigma_\text{obs}^2$ using $\mathcal{D}_m$. 
    \STATE Obtain the posterior predictive distribution $\bm{f} | \mathcal{D}_m$ marginal on $R \times t_m$.
    \FOR {{$j = 1, 2, \ldots, J$} {\color{blue} (\textit{parallelizable})}}
        \STATE Sample $\bm{f}^{(j)}(R, t_m)$ from marginal posterior predictive distribution. 
        \STATE Propagate $\bm{f}^{(j)}(R, t_m)$ using the SPDE approach over $[t_m, T]$ at stepsize $\delta_t$ to obtain sampled vector fields $F^{(j)}$.
        \FOR {{$\bm{s} \in R$}  {\color{blue} (\textit{parallelizable})}}
            \STATE Simulate the trajectory of a placed Lagrangian observer at $(\bm{s}, t_m)$ in vector field $F^{(j)}$ using the ODE solver to obtain $P^T(\bm{s}, t_m, F^{(j)})$.
            \STATE Simulate the trajectory of existing Lagrangian observers in vector field $F^{(j)}$ using the ODE solver to obtain $P^T(\bm{s}_\text{existing}, t_m, F^{(j)})$.
            \STATE Combine the trajectories into $P^T_j(\bm{s}_\text{ag}) = P^T(\bm{s}, t_m, F^{(j)}) \cup P^T(\bm{s}_\text{existing}, t_m, F^{(j)})$.
            \STATE Compute the utility contribution from $P^T_j(\bm{s})$ via \[
             \log\det \left( I + \sigma_\text{obs}^2K\left( X_m^{+P^T_j(\bm{s}_\text{ag})}\right) \right).
            \]
        \ENDFOR
    \ENDFOR
    \STATE Aggregate the utility contributions from the $J$ sampled vector fields and apply the acquisition \[
    \begin{split}
        &\bm{s}^*_{m} = \argmax_{\bm{s} \in R} \frac{1}{J} \sum_{j = 1}^J  \\
        &\quad \left[  \log\det \left( I + \sigma_\text{obs}^2K\left( X_m^{+P^T_j(\bm{s}_\text{ag})} \right) \right) \right].
    \end{split}
    \]
    \STATE Initialise an additional Lagrangian observer at $\bm{s}^*_m$. 
\ENDFOR
\end{algorithmic}
\end{algorithm}

\section{Mathematical Backgrounds}

\subsection{Gaussian Process} \label{sec:gp-appendix}

A Gaussian process (GP) $f \sim \mathcal{GP}(\mu, k_\theta)$ is a stochastic process with mean function $\mu$ and kernel $k_\theta$ of hyperparameter $\theta$. Let the input space be $\R^m$, and the output space be $\R$. For notational simplicity, we also set the mean function to be zero. 

A Gaussian process marginal on a finite set of test locations $x_* \in \R^{N_\text{test}}$, denoted by $f(x_*)$, is by definition a multivariate Gaussian with mean vector $\mu(x_*) \in \R^{N_\text{test}}$ and covariance Gram matrix $K_{**} := K_\theta(x_*, x_*) \in \R^{N_\text{test} \times N_\text{test}}$. To obtain a sample $f^{(1)}$ from such a marginal distribution, we would have \[
f^{(1)}(x_*) = \mu(x_*) + \sqrt{K_{**}} ~\xi,\qquad \xi \sim N(0, I_{N_\text{test}})
\] 
where $\sqrt{K_{**}}$ is the matrix square root of $K_{**}$, which could be obtained using multiple methods (e.g. eigendecomposition, Cholesky decomposition) \citep{trefethen1997numerical}. 

Assuming we make noisy observations $\mathcal{D} = \{ (x_i, y_i) \}_{i=1}^{N_\text{obs}}$ such that, for any $i = 1, 2, \ldots, N_\text{obs}$, \[
y_i = f(x_i) + \varepsilon_i, \qquad \varepsilon_i \sim N(0, \sigma_\text{obs}^2).
\]
The log likelihood function $l$ with parameter $\beta = (\theta, \sigma_\text{obs})$ for observations $\mathcal{D}$ is given by \[
\begin{split}
    l( \beta | \mathcal{D}) &= - \frac{1}{2} \overline{{y}}^T ( K_{\theta}(X,X) + \sigma_\text{obs}^2 I)^{-1} \overline{{y}} \\
    &\quad - \frac{1}{2} \log \left| K_{\theta}(X,X) + \sigma_\text{obs}^2 I \right| \\
    &\quad - N_\text{obs} \log 2\pi  
\end{split}
\]
where $X = [\bm{x}_1, \bm{x}_2, \ldots, \bm{x}_{N_\text{obs}}]^T \in \R^{N_\text{obs} \times m}$, $\overline{y} = [y_1, y_2, \ldots, y_{N_\text{obs}}]^T \in \R^{N_\text{obs}}$, and $K_\theta(X, X)$ denote the Gram matrix of kernel $k_\theta$ between input $X$ and $X$. 

Conditional on the observations $\mathcal{D} = \{ (X, y)\}$, the posterior predictive distribution at test points $x_* \in \R^{N_\text{test} \times m}$ is given by \[
\begin{split}
    f(x_*) | \mathcal{D} &\sim N(\mu_*, \Sigma_*) \\
    \mu_* &= K_*^T (K + \sigma_{\text{obs}}^2 I)^{-1} y \\
    \Sigma_* &= K_{**} - K_*^T (K + \sigma_{\text{obs}}^2 I)^{-1} K_* 
\end{split}
\]
where we have the Gram matrices\[
\begin{split}
    K_{**} &= K_\theta (x_*, x_*) \in \R^{N_\text{test} \times N_\text{test}}, \\
    K_* &= K_\theta (X, x_*) \in \R^{N_\text{obs} \times N_\text{test}},  \\
    K &= K_\theta(X,X) \in \R^{N_\text{obs} \times N_\text{obs}}.
\end{split}
\]

Using the vanilla GP formulation presented above, the computational cost of likelihood training is $O(N_\text{obs}^3)$, while the cost of prediction at $N_\text{test}$ test points is $O(N_\text{obs}^3 + N_\text{test}^3 + N_\text{obs}^2 N_\text{test} + N_\text{obs}N_\text{test}^2)$.

\subsection{Helmholtz GP} \label{sec:helmholtz-gp}

The Helmholtz GP of \citet{berlinghieri2023gaussian} is a vector-valued \citep{alvarez2012kernels} GP. For a vector field $F$, the Helmholtz decomposition \citep{bhatia2012helmholtz} breaks it down as the linear combination of the potential function $\Phi$ and stream function $\Psi$ as \[
F = \text{grad} \Phi + \text{rot} \Psi
\]
for differential operators $\text{grad}$ and $\text{rot}$. By imposing a GP structure to the potential and stream functions, i.e. $\Phi \sim \mathcal{GP}(0, k_\Phi)$ and $\Psi \sim \mathcal{GP}(0, k_\Psi)$, we have the Helmholtz kernel $F \sim \mathcal{GP}(0, k_{\text{Helm}})$ using the property of kernel under linear operators \citep{agrell2019gaussian} \[
\begin{split}
        &k_{\text{Helm}}(\bm{x}, \bm{x'}) = \\
        &\begin{bmatrix}
        \partial^2_{x_1 x'_1} k_\Phi + \partial^2_{x_2 x'_2} k_\Psi & \partial^2_{x_1 x'_2} k_\Phi - \partial^2_{x_2 x'_1} k_\Psi \\
        \partial^2_{x_2 x'_1} k_\Phi - \partial^2_{x_1 x'_2} k_\Psi & \partial^2_{x_2 x'_2} k_\Phi + \partial^2_{x_1 x'_1} k_\Psi
    \end{bmatrix}
\end{split}
\]
for $\bm{x}, \bm{x'} \in \mathbb{R}^2$ if we assume ${\Phi}$ and $\Psi$ are independent. The Helmholtz kernel with dependent potential and stream function can be similarly obtained using linear properties of the kernel -- see Section 2.1 of \citet{ponte2024inferring} for the kernel expression.

\subsection{Information Theory} \label{sec:information-theory}

Consider a continuous random variable $X$ with probability density function $p(x)$. Its (differential) \textbf{entropy} $H(X)$ is provided by \citet{alma9920287680001221} \[
H(X) := \E_{x \sim X} [- \log p(x)] = \int - p(x) \log p(x) dx.
\]
For example, the entropy of a multivariate Gaussian $H(X)$ where $X \sim N_d(\mu, \Sigma)$ is given by \[
\begin{split}
  &H(X) \\
  &= - \E_{x \sim X} [\log p(x)] \\
  &= \E \left[\frac{d}{2} \log \pi +\frac{1}{2} \log \det \Sigma + \frac{1}{2} (x - \mu)^T \Sigma^{-1} (x - \mu)\right] \\
  &= \frac{d}{2} \log \pi + \frac{1}{2} \log \det \Sigma + \frac{1}{2} \text{tr} \left[\Sigma^{-1} \E\left[ (x - \mu)^T (x - \mu)\right] \right] \\
  &=\frac{d}{2} \log \pi + \frac{1}{2} \log \det \Sigma + \frac{1}{2} \text{tr} \left[\Sigma^{-1} \Sigma\right] \\
  &= \frac{d}{2} \log \pi + \frac{d}{2} + \frac{1}{2} \log \det \Sigma.
\end{split}
\]

For two continuous random variables $X, Y$ with joint density $p(x,y)$ and individual densities $p_X, p_Y$ respectively, the \textbf{joint entropy} of $X, Y$ is defined as \[
\begin{split}
H(X, Y) &:= \E_{(x, y) \sim (X, Y)} \left[ -\log p(x,y) \right] \\
&= \iint - p(x, y) \log p(x,y) dx dy. 
\end{split}
\]
The \textbf{conditional entropy} of $X$ given $Y$ is defined as \[
\begin{split}
H(X | Y) &:= \E_{(x, y) \sim (X, Y)} \left[ - \log p(x|y)\right] \\
&= \iint - p(x, y) \log \frac{p(x,y)}{p(y)}dx dy. 
\end{split}
\]
When $X, Y$ are independent, so $p(x,y) = p_X(x)p_Y(y)$ for any $x, y$, we have the identity \[
H(X, Y) = H(X) + H(Y), \qquad H(X|Y) = H(X).
\]
Subsequently, we define the \textbf{mutual information} between $X$ and $Y$ as the measure of mutual dependency between the two random variables, calculated as \[
\begin{split}
    I(X ; Y) &:= H(X) + H(Y) - H(X, Y)  \\
    &= H(X) - H(X|Y) = H(Y) - H(Y|X) 
\end{split}
\]
which can also be viewed as the Kullback-Leibler divergence between the density of the joint distribution $p(x,y)$ and the outer product distribution $p(x) \otimes p(y)$. 

\section{Expected Information Gain Computation for Gaussian Process Surrogates} \label{sec:EIG-GP-computation}

Following Section \ref{sec:gp-appendix}, a GP $f \sim \mathcal{GP}(0, k)$ and noisy observations $\mathcal{D} = \{ (x_i, y_i)\}_{i = 1}^{N_\text{obs}}$ with i.i.d. Gaussian noises $y_i = f(x_i) + \varepsilon_i$ for $\varepsilon_i \sim N(0, \sigma_\text{obs}^2I)$. The posterior predictive distribution at $N_\text{test}$ test points $x_*$ is a multivariate Gaussian distribution by Gaussian process conjugacy, i.e. \[
\begin{split}
    f(x_*) | \mathcal{D} &\sim N(\mu_*, \Sigma_*) \\
    \mu_* &= K_*^T (K + \sigma_{\text{obs}}^2 I)^{-1} y \\
    \Sigma_* &= K_{**} - K_*^T (K + \sigma_{\text{obs}}^2 I)^{-1} K_*.
\end{split}
\]
Following the result of Section \ref{sec:information-theory}, the entropy of a multivariate Gaussian is linked to the log determinant of its covariance matrix. So, the entropy of the posterior predictive at finitely many test points is given by \[
\begin{split}    
&H(f(x_*) | \mathcal{D}) \\
&= \frac{1}{2}\log \det \Sigma_* +  \text{const} \\
&= \frac{1}{2}\log \det \left(K_{**} - K_*^T (K + \sigma_{\text{obs}}^2 I)^{-1} K_*\right) + \text{const.}
\end{split}
\]

For a hypothetical observation location $x$ and its measurement $y$, the information gain of observing this additional fictitious observation is provided by the difference between posteriors $f|\mathcal{D}$ and $f | \mathcal{D}\cup \{(x,y)\}$, so \[
IG(y) = H(f|\mathcal{D}) - H(f|\mathcal{D}\cup \{(x,y)\}).
\]
For finite test points $x_*$, we can further simplify the above expression to \[
\begin{split}
    IG(y) &= H(f|\mathcal{D}) - H(f|\mathcal{D}\cup \{(x,y)\}) \\
    &= \log\det \Sigma_* - \log\det \Sigma_*^{+}, \\
    \Sigma_* &= K_{**} - K_*^T (K + \sigma_{\text{obs}}^2 I)^{-1} K_*,\\
    \Sigma_*^+ &= K_{**} - (K_*^+)^T (K^+ + \sigma_{\text{obs}}^2 I^+)^{-1} K_*^+,\\
    X^+ &= X\cup x, \\
    \mathcal{D}^+ &= \mathcal{D} \cup \{(x,y)\}, \\
    K^+ &= K(X^+,X^+),\\
    K^+_* &= K(X^+, x_*).
\end{split}
\]
Furthermore, we notice that there is no dependency of the observation value $y$ in the above expression of information gain, which means the expected information gain is identical to the information gain, i.e. \[
EIG(x) = \E_{g(y|x)} [IG(y)] = \log\det \Sigma_* - \log\det \Sigma_*^{+}.
\]

Although a closed-form expression for the EIG acquisition function exists in active learning with Gaussian process surrogates under Gaussian observation noises, the computation cost of the above formulation is still high, involving calculating the posterior predictive covariance matrix and its determinant. In particular, for each possible measurement point $x$, the computational cost of calculating $EIG(x)$ is $O(N_\text{test}^3 + N_\text{obs}^3 + N_\text{obs}^2 N_\text{test} + N_\text{obs}N_\text{test}^2)$. 

\subsection{Reformulation of Expected Information Gain} \label{sec:reformulation-EIG}

Fortunately, we can reformulate the EIG to greatly reduce the computational costs using the property of mutual information (see Section \ref{sec:information-theory} for definitions). The expression presented below appears in Section 2.2 of \citet{srinivas2010gaussian} too. 

Instead of focusing on the marginal distribution of the GP at finitely many test locations, we consider the full distribution $p(f)$ and look at its expected information gain for additional observations. For a GP $p(f)$ and observations $\mathcal{D}_A = \{(x_A, y_A)\}$ with $y_A = f(x_A) + \varepsilon$, $\varepsilon \sim N(0, \sigma_\text{obs}^2I)$, we have \[
\begin{split}
&IG(y_A) \\
&= H(p(f)) - H(p(f | \mathcal{D}_A)) \\
&= MI(f;\mathcal{D}_A) \\
&= H(p(\mathcal{D}_A)) - H(p(\mathcal{D}_A | f))
\end{split}
\] 
using the symmetry property of mutual information between two random variables. Since $y_A = f(x_A) + \varepsilon$, the covariance matrix of $y_A$ is $K(x_A, x_A) + \sigma_\text{obs}^2I$. Also, the covariance of $y_A | f$ is merely $\sigma_\text{obs}^2I$. Thus, we have \[
\begin{split}
    &IG(y_A) \\
    &= H(p(\mathcal{D}_A)) - H(p(\mathcal{D}_A | f)) \\
    &= \frac{1}{2} \log \det (K(x_A, x_A) + \sigma_\text{obs}^2I) - \frac{1}{2} \log \det (\sigma_\text{obs}^2I) \\
    &= \frac{1}{2} \log \det (\sigma_\text{obs}^{-2}K(x_A, x_A) + I).    
\end{split}
\]
Using the above result, we consider $\mathcal{D}_B = \mathcal{D}_A \cup \{(x,y)\} = \{(x_B, y_B)\}$ the observations set with additional observation $(x,y)$ and can compute the following information gain \[
\begin{split}
    &IG(y) \\
    &= H(p(f|\mathcal{D}_A)) - H(p(f|\mathcal{D}_B)) \\  
    &= H(p(f | \mathcal{D}_A)) - H(p(f)) + H(p(f)) - H(p(f|\mathcal{D}_B)) \\
    &= -\frac{1}{2} \log \det (\sigma_\text{obs}^{-2}K(x_A, x_A) + I)  \\
    &\qquad + \frac{1}{2} \log \det (\sigma_\text{obs}^{-2}K(x_B, x_B) + I)
\end{split}
\]
and therefore \[
\begin{split}
    &\argmax_x EIG(x) \\
    &= \argmax_x \E_{y\sim p(y |\mathcal{D}_A, x)} [IG(y)] \\
    &= \argmax_x \Bigg[ -\frac{1}{2} \log \det (\sigma_\text{obs}^{-2}K(x_A, x_A) + I)  \\
    &\qquad + \frac{1}{2} \log \det (\sigma_\text{obs}^{-2}K(x_B, x_B) + I)\Bigg] \\
    &= \argmax_x \log \det (\sigma_\text{obs}^{-2}K(x_B, x_B) + I). \\
\end{split}
\]
This reformulation of the expected information gain is computationally cheap, and the computation for $EIG(x)$ for any $x$ is merely $O(N_\text{obs}^3)$. 

\section{Proof of Proposition \ref{prop:pitfall_naive_AL}} \label{sec:proof_pitfall_prop}

Here, we formalise Proposition \ref{prop:pitfall_naive_AL} and provide a proof. 

\begin{prop}[Formalisation of Prop \ref{prop:pitfall_naive_AL}]
    Consider the sequential experimental design problem with existing observation $\mathcal{D}$, measurement set $X$, and utility $U$ where the observations are made by Lagrangian observers (see Section \ref{sec:lagrangian_observations} for details). At any decision time $t$, the deployment position $x^S$ following standard utility is suboptimal w.r.t. to the true utility considering all potential observations made by the placed observer. 
\end{prop}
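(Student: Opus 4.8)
The plan is to exhibit a concrete (or at least constructively-describable) instance in which the standard-utility deployment position $x^S$ is strictly dominated by another position with respect to the true utility that accounts for the entire future trajectory of the placed Lagrangian observer. Since the claim is an existence/suboptimality statement, one counterexample suffices, and I would build it around the EIG reformulation in \eqref{eqn:MI-AL}. First I would make precise the two objective functions being compared: the \emph{standard} objective $U^S(\bm{s}) = \log\det(I + \sigma_\text{obs}^2 K(X_n^{+\bm{s}}))$, which treats the observer as contributing the single point $(\bm{s}, t_n)$, versus the \emph{true} objective $U^\star(\bm{s}) = \E_{F}[\log\det(I + \sigma_\text{obs}^2 K(X_n^{+P_F^T(\bm{s},t_n)}))]$, which accounts for the advected trajectory $P_F^T(\bm{s}, t_n)$ under the (unknown, hence posterior-sampled) field. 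The proposition is then the statement that $\argmax_{\bm{s}} U^S(\bm{s})$ need not lie in $\argmax_{\bm{s}} U^\star(\bm{s})$, and indeed can be strictly worse under $U^\star$.

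The key steps, in order: (i) Set up a minimal scenario — a bounded spatial domain, a short time horizon, and a vector field configuration (possibly deterministic, i.e. a posterior concentrated on a known field, to avoid the outer expectation) such that one candidate location $\bm{s}_{\text{border}}$ near the boundary has high \emph{instantaneous} information gain $U^S$ (e.g. it is far from existing observations, so $K(X_n^{+\bm{s}})$ has a large new diagonal-ish block) but the field there advects the observer out of the domain almost immediately, so $P_F^T(\bm{s}_{\text{border}}, t_n)$ contains essentially just the initial point and $U^\star(\bm{s}_{\text{border}}) \approx U^S(\bm{s}_{\text{border}})$. (ii) Exhibit a second location $\bm{s}_{\text{interior}}$ with slightly \emph{lower} instantaneous gain, $U^S(\bm{s}_{\text{interior}}) < U^S(\bm{s}_{\text{border}})$, but whose trajectory stays in the domain for the full horizon and sweeps through an informative region, so that $P_F^T(\bm{s}_{\text{interior}}, t_n)$ is a long chain of well-separated points and $U^\star(\bm{s}_{\text{interior}})$ is substantially larger than $U^\star(\bm{s}_{\text{border}})$. (iii) Verify $U^S(\bm{s}_{\text{border}}) > U^S(\bm{s}_{\text{interior}})$ but $U^\star(\bm{s}_{\text{interior}}) > U^\star(\bm{s}_{\text{border}})$, which forces $x^S \ne \argmax U^\star$ and hence $x^S$ is suboptimal. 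The monotonicity fact that appending more (sufficiently distinct) observation points can only increase $\log\det(I + \sigma_\text{obs}^2 K(\cdot))$ — a consequence of the positive-definiteness of the Schur complement / submatrix interlacing — is the lever that makes step (ii) work, and I would state it as a lemma.

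The main obstacle I anticipate is making the quantitative inequalities genuinely robust rather than hand-wavy: I need the instantaneous-gain ordering and the trajectory-aware ordering to point in opposite directions \emph{simultaneously}, which requires controlling both the kernel Gram determinants and the ODE flow. The cleanest route is to push to a limiting regime — e.g. take the boundary field magnitude large enough (or the domain thin enough in the outward direction at $\bm{s}_{\text{border}}$) that the border observer's trajectory length tends to zero, while the interior field is tuned so the interior trajectory collects $L$ points with pairwise kernel correlations bounded away from $1$; then $U^\star(\bm{s}_{\text{interior}})$ grows roughly like $L$ times a positive constant, eventually overwhelming the fixed instantaneous-gain advantage of $\bm{s}_{\text{border}}$. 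A secondary subtlety is the discretisation: $\bm{s}$ ranges over the finite grid $R$ and velocities are piecewise-constant per cell, so I would choose the grid and field so that $\bm{s}_{\text{border}}, \bm{s}_{\text{interior}} \in R$ and the flows behave as intended on the grid; alternatively, present the argument in continuum and note the discretised version inherits it for fine enough $R$. Finally, if one wants the posterior-sampling version rather than a deterministic field, the same construction goes through by taking the prior/likelihood so that $p(f\mid\mathcal{D})$ concentrates near the chosen field, and invoking continuity of both objectives in the field to transfer the strict inequality.
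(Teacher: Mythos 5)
Your proposal takes a genuinely different --- and substantially more ambitious --- route than the paper. The paper's own proof is a two-line definitional argument: it defines the Lagrangian utility $LU(x) = \mathbb{E}[U(\int_t^T y_s\,ds)]$, sets $x^* = \argmax_{x\in X} LU(x)$, and observes that $LU(x^S) \le LU(x^*)$ holds for \emph{any} $x^S \in X$ simply because $x^*$ is the maximiser. No counterexample, no strictness, no analysis of trajectories or kernels is involved; ``suboptimal'' is interpreted as the weak inequality. Your proposal instead aims to establish \emph{strict} suboptimality by constructing an instance (border placement with high instantaneous gain but immediate domain exit, versus interior placement with a long informative trajectory) in which $\argmax U^S$ and $\argmax U^\star$ provably differ. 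This is a stronger statement than the paper proves, and arguably the one that actually justifies the word ``pitfall'': the paper's inequality is satisfied even when $x^S = x^*$, so it does not by itself demonstrate that standard active learning ever makes a worse decision. Your monotonicity lemma (appending observation points cannot decrease $\log\det(I + \sigma_{\text{obs}}^2 K(\cdot))$, by positive semi-definiteness of the Schur complement) is correct and is the right lever for the interior location. The cost of your route is that the construction must be carried through quantitatively --- you correctly identify that controlling the Gram determinants and the ODE flow simultaneously is the hard part, and your limiting-regime strategy (trajectory length of the border observer tending to zero while the interior trajectory accumulates $L$ well-separated points) is a plausible way to close it, though as written it remains a sketch. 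In short: your argument, if completed, implies the paper's result but is not the paper's argument; the paper buys brevity at the price of proving something essentially vacuous, while you buy a meaningful separation at the price of a nontrivial construction.
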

\begin{proof}
    At any decision time $t$, the standard utility construction that only considers the initial placement location, i.e. \[
    x^S := \argmax_{x \in X}  \mathbb{E}_{p(y|\mathcal{D}, x)}[U(y)]
    \]
    while the Lagrangian utility $LU$ accounting for all potential observations made by the placed observer yields the decision $x^*$ defined as \[
    \begin{split}
        x^* &:= \argmax_{x \in X} LU(x) \\
        &:= \argmax_{x \in X} \mathbb{E} \left[  U\left(\int_t^T y_s ds \right) \right]
    \end{split}
    \] 
    where $T$ is the terminal time of the experimental design. The decision from standard utility construction is suboptimal, in the sense that \[
    LU(x^S) \le LU(x^*).
    \]
    This follows directly from the definition, as $x^*$ is constructed to be the maximiser of $LU$, any other value $x \in X$ will not produce $LU(x)$ that is greater than $LU(x^*)$. Since $x^S \in X$, the desired inequality $LU(x^S) \le LU(x^*)$ holds. 
\end{proof}

We should remark that the BALLAST utility of \eqref{eqn:BALLAST-AL-v0} approximates the Lagrangian utility $LU$ above, where the integral is replaced by the sum of discretised observer trajectories.  

\section{Computational Tricks}

\subsection{Kronecker Products} \label{sec:kronecker-product-math}

Given two matrices $A \in \R^{m \times n}, B \in \R^{p \times q}$, the \textbf{Kronecker product} $A \otimes B$ is defined as \[
A \otimes B = \begin{bmatrix}
    a_{11} B & \cdots & a_{1n} B \\
    \vdots & \ddots & \vdots \\
    a_{m1} B & \cdots & a_{mn} B
\end{bmatrix}.
\]

Below, we will state several key properties of Kronecker products and establish a computationally efficient Kronecker matrix-vector product. The basic properties of the Kronecker product can be established from the definition, and additional details can be found in Chapter 5.2 of \citet{saatcci2012scalable}. 

For matrices $A, B, C, D$ with suitable sizes such that the following operations make sense, we have \begin{itemize}[leftmargin=*]
    \item $(A \otimes B)(C \otimes D) = (A C) \otimes (BD)$
    \item $(A \otimes B)^T = A^T \otimes B^T$.
\end{itemize}
Note that a direct consequence of the above properties is that, for matrices admitting Cholesky decomposition $P = L_PL_P^T$ and $Q = L_QL_Q^T$, we have \[
\begin{split}
    P \otimes Q &= (L_P L_P^T) \otimes (L_Q L_Q^T) \\
    &= (L_P \otimes L_Q)  (L_P^T \otimes L_Q^T) \\
    &= (L_P \otimes L_Q) (L_P \otimes L_Q)^T
\end{split}
\]
and thus the lower triangular matrix for the Cholesky decomposition of $P \otimes Q$ is given by $L_P \otimes L_Q$. 

Before stating the matrix-vector product result, we first need to define the \textbf{vectorization} operation. For a matrix $A \in \R^{m,n}$, its vectorization $\vectorize (A)$ is a column vector that concatenates the column vectors of $A$ from left to right, i.e. \[
\vectorize (A) = [ a_{11}, \ldots, a_{m1}, \ldots, a_{1n}, \ldots, a_{mn} ]^T.
\]

\begin{prop}
    For matrix $A \in \R^{m \times n}, B \in \R^{p \times q}, X \in \R^{n \times p}$, we have \[
    \vectorize(AXB) = (B^T \otimes A) \vectorize(X). 
    \]
\end{prop}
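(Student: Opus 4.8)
The plan is to prove the identity $\vectorize(AXB) = (B^T \otimes A)\vectorize(X)$ by direct computation of the $(i,j)$-block entry on each side, reducing everything to the definition of the Kronecker product and the column-stacking convention for $\vectorize$. First I would fix notation: write $X$ in terms of its columns $X = [x_1, \dots, x_p]$ with $x_k \in \R^n$, so that $\vectorize(X) = [x_1^T, \dots, x_p^T]^T$ is the stacked $np$-vector. Since $B \in \R^{p\times q}$, the product $AXB$ lies in $\R^{m \times q}$, and its $\ell$-th column (for $\ell = 1, \dots, q$) is $A X B_{\cdot \ell} = A \sum_{k=1}^p b_{k\ell} x_k = \sum_{k=1}^p b_{k\ell}\, A x_k$, using linearity. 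Hence $\vectorize(AXB)$ is the $mq$-vector whose $\ell$-th block of length $m$ equals $\sum_{k=1}^p b_{k\ell}\, A x_k$.

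Next I would compute the right-hand side blockwise. The matrix $B^T \otimes A$ has $q$ block-rows and $p$ block-columns, where the $(\ell, k)$ block is $(B^T)_{\ell k} A = b_{k\ell} A \in \R^{m \times n}$. Multiplying $B^T \otimes A$ against $\vectorize(X)$, whose $k$-th block is $x_k$, the $\ell$-th output block is $\sum_{k=1}^p b_{k\ell}\, A x_k$, which is exactly the $\ell$-th block of $\vectorize(AXB)$ computed above. Since the two vectors agree block-by-block for every $\ell = 1, \dots, q$, they are equal, establishing the claim.

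I do not anticipate a genuine obstacle here; the proof is essentially bookkeeping. The one point requiring care is keeping the index conventions consistent: the transpose on $B$ in $B^T \otimes A$ is precisely what makes the $(\ell,k)$ block pick up $b_{k\ell}$ rather than $b_{\ell k}$, which is needed to match the expansion of the $\ell$-th column of $AXB$ in terms of the columns of $X$. I would therefore state explicitly at the outset which $\vectorize$ convention (column-major, as defined earlier in the excerpt) is in force, since the sister identity $\vectorize(AXB) = (B \otimes A)\vectorize(X)$ holds under the row-major convention and a reader could be confused. A brief sanity check on a small case (say $A, B$ both $1\times 1$ scalars, or $X$ a single column) can be mentioned to reassure, but the blockwise argument above is complete and rigorous on its own.
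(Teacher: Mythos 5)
Your proof is correct and follows essentially the same route as the paper's: both expand the $\ell$-th column of $AXB$ as $\sum_{k} b_{k\ell} A x_k$ and identify this with the corresponding block-row of $B^T \otimes A$ acting on the stacked columns of $X$. The index bookkeeping (the $(\ell,k)$ block of $B^T\otimes A$ being $b_{k\ell}A$) is handled correctly, so there is nothing to add.
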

\begin{proof}
    First, we consider the $k$-th column of the matrix product $AXB$, which can be expressed as below, \[
    \begin{split}
        (AXB)_{:, k} &= ((AX)B)_{:,k} = (AX)B_{:,k} = A (XB_{:,k}) \\
        &= A \sum_{i = 1}^p X_{:,i}B_{i,k} = \sum_{i = 1}^p B_{i,k}A X_{:, i} \\
        &= \begin{bmatrix}
            B_{1,k} A & B_{2, k} A & \cdots & B_{p, k}A
        \end{bmatrix} \vectorize (X) \\ 
        &= (B_{:, k}^T \otimes A) \vectorize(X). 
    \end{split}
    \]
    Next, using the above expression, the vectorization $\vectorize (AXB)$ is a vertical stack of the above quantity, so we have \[
    \begin{split}
        \vectorize(AXB) &= \begin{bmatrix}
            (AXB)_{:, 1} \\
            (AXB)_{:, 2} \\
            \vdots \\
            (AXB)_{:, q}
        \end{bmatrix} \\
        &= \begin{bmatrix}
            (B_{:, 1}^T \otimes A) \vectorize(X) \\
            (B_{:, 2}^T \otimes A) \vectorize(X) \\
            \vdots \\
            (B_{:, q}^T \otimes A) \vectorize(X)
        \end{bmatrix} \\
        &= \begin{bmatrix}
            B^T \otimes A
        \end{bmatrix} \vectorize(X).
    \end{split}
    \]
\end{proof}

It can be observed immediately that the left-hand-side expression of the quantity $\vectorize(AXB)$ uses less storage and computes faster than the right-hand-side expression with Kronecker product $B^T \otimes A$. 

\subsection{Rank-q Gram Matrix Updates} \label{sec:rank-q-update-math}

For a kernel $k$, we denote the Gram matrix generated under this kernel at inputs $X, Y$ as $K(X, Y)$ such that $K(X, Y)_{i,j} = k(X_i, Y_j)$, and denote $K(X, X) = K(X)$ for simplicity. With Gaussian processes, we may consider computations with $K(X \cup X_*)$ when we have already computed $K(X)$ at an earlier time. For $X_*$ of size $q$, such computations are often denoted as the rank-$q$ updates of Gram matrices, and the updated Gram matrix is of the following form \[
K(X \cup X_*) = \begin{bmatrix}
    K(X) & K(X, X_*) \\
    K(X_*, X) & K(X_*)
\end{bmatrix}
\]
with $K(X, X_*) = K(X_*, X)^T$. 

Here, we will describe how we can compute the determinant more efficiently with rank-$q$ updates, as such computations are repeatedly conducted for the utility computation, such as \eqref{eqn:BALLAST-AL-v1}. This relies on the following result of the block matrix determinant.

\begin{prop}
    For invertible matrix $A$, we have \[
    \det \begin{bmatrix}
        A & B \\
        C & D 
    \end{bmatrix} = \det (A) \det (D - CA^{-1}B).
    \]
\end{prop}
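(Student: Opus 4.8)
The statement to prove is the classical block-matrix determinant (Schur complement) identity: for invertible $A$,
\[
\det \begin{bmatrix} A & B \\ C & D \end{bmatrix} = \det(A)\det(D - CA^{-1}B).
\]

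The plan is to exhibit an explicit block-triangular factorization of the partitioned matrix and then take determinants. First I would write down the identity
\[
\begin{bmatrix} A & B \\ C & D \end{bmatrix}
= \begin{bmatrix} I & 0 \\ CA^{-1} & I \end{bmatrix}
\begin{bmatrix} A & 0 \\ 0 & D - CA^{-1}B \end{bmatrix}
\begin{bmatrix} I & A^{-1}B \\ 0 & I \end{bmatrix},
\]
which one verifies by directly multiplying the three factors on the right: the $(1,1)$ block is $A$, the $(1,2)$ block is $A A^{-1}B = B$, the $(2,1)$ block is $CA^{-1}A = C$, and the $(2,2)$ block is $CA^{-1}B + (D - CA^{-1}B) = D$. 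This factorization requires only that $A$ be invertible so that $A^{-1}$, $CA^{-1}$, and $A^{-1}B$ are well-defined and of the right sizes.

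Next I would apply multiplicativity of the determinant, $\det(XYZ) = \det(X)\det(Y)\det(Z)$, to the factorization. The two outer factors are block-triangular with identity blocks on the diagonal, so each has determinant $1$; the middle factor is block-diagonal, so its determinant is $\det(A)\det(D - CA^{-1}B)$. Combining these gives the claimed identity immediately. (For completeness one could note that the determinant of a block-triangular matrix $\begin{bmatrix} P & Q \\ 0 & R \end{bmatrix}$ equals $\det(P)\det(R)$, which follows from cofactor expansion or from the Leibniz formula restricted to permutations that respect the block structure; the block-diagonal case is the special instance with $Q = 0$.)

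There is essentially no hard part here: the only thing to be careful about is bookkeeping of block sizes, namely that $A$ is square (of size matching its stated invertibility), $D$ is square, and $B$, $C$ have the transpose-compatible rectangular shapes, so that all the products $CA^{-1}$, $A^{-1}B$, $CA^{-1}B$ are defined and the Schur complement $D - CA^{-1}B$ has the same size as $D$. Once the factorization is written down, the determinant computation is a one-line consequence of multiplicativity, so the main (minor) obstacle is simply verifying the block product correctly rather than any conceptual difficulty.
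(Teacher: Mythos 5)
Your proof is correct. The paper states this proposition without proof, treating it as a standard fact about block determinants used to justify the rank-$q$ update of the Gram matrix, so there is no in-paper argument to compare against. Your block-LDU factorization
\[
\begin{bmatrix} A & B \\ C & D \end{bmatrix}
= \begin{bmatrix} I & 0 \\ CA^{-1} & I \end{bmatrix}
\begin{bmatrix} A & 0 \\ 0 & D - CA^{-1}B \end{bmatrix}
\begin{bmatrix} I & A^{-1}B \\ 0 & I \end{bmatrix}
\]
is the canonical way to establish the identity: the block multiplication checks out, the outer unitriangular factors have determinant $1$, and multiplicativity of the determinant together with the block-triangular determinant formula gives the claim. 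The only hypothesis used is invertibility of $A$, exactly as stated, so the argument is complete.
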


Therefore, to efficiently compute the determinant of the Gram matrices $K(X \cup X_*)$ with fixed $X$ and different $X_*$, we could first compute the lower Cholesky decomposition for $K(X) = LL^T$, which gives us the determinant and inverse as \[
\det K(X) = \left( \prod_i L_{{ii}} \right)^2, \qquad K(X)^{-1} = L^{-T} L^{-1}
\]
and thus we have \[
\begin{split}
 &\det K(X \cup X_*) \\
 &= \det K(X) \det \Big( K(X_*) \\
 &\qquad - K(X, X_*)^T L^{-T} L^{-1} K(X, X_*)  \Big)  \\
 &= \det K(X) \det \Big( K(X_*) \\
 &\qquad - [L^{-1} K(X, X_*)]^T [L^{-1} K(X, X_*)]  \Big). 
\end{split}
\]
Similar reformulations can be applied for the determinant computation of \eqref{eqn:BALLAST-AL-v1}. Such a rank-$q$ update would be used as the default for the computation in this work. 

\section{The SPDE Approach to Gaussian Process Regression} \label{sec:SPDE-GP-details}

Consider a spatio-temporal GP $f(x) \sim \mathcal{GP}(0, k)$ with $x = (\bm{s}, t) \in \R^3, \bm{s} \in \R^2, t \in \R$ and separable kernel $k(x , x') = k_\text{space}(\bm{s}, \bm{s}') k_\text{time}(t, t')$ where temporal kernel $k_\text{time}$ is set to be Mat\'ern-$3/2$. Below, we will describe the details of the dynamic formulation of such a GP using the stochastic partial differential equation (SPDE) approach \citep{solin2016stochastic}. 

\subsection{State-Space Formulation of the Temporal Component}

First, consider a zero-mean temporal GP with a Mat\'ern $\frac{3}{2}$ kernel in isolation. Let $l$ denote the kernel's length-scale and $\sigma^2$ denote its variance. We would also define $\lambda := \sqrt{3}/l$ for simplicity. This process $\{h(t)\}_t$ can be modelled as the solution to a stochastic differential equation (SDE). In companion (state-space) form, the temporal dynamics are given by
\[
\begin{split}
    \frac{d}{dt} \boldsymbol{h}(t) &= \frac{d}{dt}\begin{bmatrix} h(t) \\ \frac{d}{dt} {h}(t) \end{bmatrix} \\
    &= \underbrace{
\begin{bmatrix}
0 & 1 \\
-\lambda^2 & -2\lambda
\end{bmatrix}}_{F}
\begin{bmatrix}
h(t) \\ \frac{d}{dt} {h}(t)
\end{bmatrix}
+
\underbrace{
\begin{bmatrix}
0 \\ 1
\end{bmatrix}}_{L} w(t),
\end{split}
\]
driven by the white noise process $w(t)$ with spectral density matrix $Q_c = 4\lambda^3 \sigma^2 I_2$. For this SDE, the \textit{exact} one-step transition with stepsize $\delta_t$ is given by \[
\boldsymbol{h}(t+ \delta_t) = \Phi \boldsymbol{h}(t) + \xi, \qquad \xi \sim N(0, Q)
\]
where \[
\begin{split}
    \Phi &= \exp\left(F\, \delta_t\right), \\
    Q &= P_\infty - \Phi \,P_\infty\, \Phi^T, \\
    P_\infty &= \begin{bmatrix}
    \sigma^2 & 0 \\ 0 & \lambda^2 \sigma^2
\end{bmatrix}
\end{split}
\]
and $P_\infty$ is the covariance matrix for the equilibrium distribution of the SDE.  

\subsection{State-Space Formulation of the spatio-temporal Model}

Assume the spatial grid we are interested in is denoted by $R$ with $N_{\text{space}}$ points. The corresponding spatial Gram matrix with kernel $k_\text{space}$ is denoted by $K_{\text{space}} \in \mathbb{R}^{DN_{\text{space}} \times DN_{\text{space}}}$ where $D$ is the output dimension. 

For a single spatial location $\bm{s}$ and time $t$, the extended state is $\mathbf{f}(\bm{s}, t) = \begin{bmatrix} f(\bm{s}, t) & \partial_t f(\bm{s}, t) \end{bmatrix}^T$. Because the spatial and temporal components are separable by construction, we can incorporate the spatial dimensions into the evolution using Kronecker products $\otimes$, giving us the SPDE \[
\frac{d}{dt} \bm{f}(R,t) = (I_{\text{space}} \otimes F) \bm{f}(R,t) + (I_{\text{space}} \otimes L) \boldsymbol{w}(t)
\]
driven by the white noise process $\boldsymbol{w}(t)$ with spectral density matrix $Q_\text{full} = K_\text{space} \otimes Q_c$. Here, $I_{\text{space}}$ is the $DN_{\text{space}}\times DN_{\text{space}}$ identity matrix. Subsequently, the one-step transition from $f_k$ to $f_{k+1}$ with stepsize $\delta_t$ is given by \[
f_{k+1} =  \Phi_{\text{full}}  f_k + e_k,\quad e_k \sim N\left(0,\, Q_{\text{full}}\right),
\]
where $\Phi_{\text{full}} = I_{\text{space}} \otimes \Phi$ and $Q_{\text{full}} = K_{\text{space}} \otimes Q$. The inclusion of a spatial component at each time changes the white noise process driving the SDE and turns the full equation into an SPDE. Like with the temporal GP case, this is a mere reformulation, and no approximation happened. 

We can extract the GP of interest $f$ from the full state vector $\mathbf{f}(R, t) = \begin{bmatrix} f(R, t) & \partial_t f(R, t) \end{bmatrix}^T$ using the measurement operator $H_\text{full}$ defined by 
\[
H_{\text{full}} = I_{\text{space}} \otimes \begin{bmatrix} 1 & 0 \end{bmatrix},
\]
so that the GP of interest is extracted via $f(R,t) = H_{\text{full}} \bm{f}(R, t)$. 

At time $t_k$, when we make observations at a subset of the full spatial grid $R$, we could construct a measurement operator $H_k$ that selects the right coordinates of the full state, i.e. we would have \[
\bm{y}_k = H_k \bm{f}(R, t_k) + \bm{\varepsilon}_k, \qquad \bm{\varepsilon}_k \sim N(0, \sigma_\text{obs}^2I)
\]
Therefore, the state-space formulation of the spatio-temporal GP of interest is given by \[
\begin{split}
f_{k+1} &= \Phi_{\text{full}} f_k + \xi_k,\qquad \xi_k \sim {N}\left(0,\, Q_{\text{full}}\right), \\
y_k &= H_{k} f_k + \varepsilon_k,\qquad \varepsilon_k \sim {N}(0,\sigma^2_{\text{obs}} I).
\end{split}
\]
for observation time indices $k = 0, 1, \ldots, T$. 

\subsection{Regression as Sequential Inference}

The state-space formulation of spatio-temporal GP allows us to consider the GP dynamically and enables the regression task to be converted to a filtering and smoothing task. In particular, as we know the exact, analytical transition and emission dynamics of the state space model, we can apply a Kalman filter and a Rauch-Tung-Striebel (RTS) smoother \citep{sarkka2019applied}. 

GP regression is equivalent to doing the filtering and then smoothing of the observations. For posterior prediction, if the prediction time is after the last observation time, one would use the state-space model transition formula; if the prediction time is before the last observation time but different from any observation time, one would include it in the filtering step, then be smoothed. Prediction at a new location requires re-running the filtering and smoothing by extending the new location into the spatial grid $R$. 

Below, we will present the filtering and smoothing at a regular time grid indexed $k = 0, 1, \ldots, T$ where the observation times are a subset of it. Also, the subscript $h | j$ of mean $m$ and covariance $P$ represents the mean and covariance at time index $h$ conditional on the observations until time index $j$. 

\subsubsection{Kalman Filtering}

The Kalman filter proceeds by alternating between the propagation step and the assimilation step.

\paragraph{Propagation Step:}  
From the filtered state estimate at time $k$, with mean $m_{k|k}$ and covariance $P_{k|k}$, we predict the state at time $k+1$:
\[
\begin{aligned}
m_{k+1|k} &= \Phi_{\text{full}}\, m_{k|k}, \\
P_{k+1|k} &=\Phi_{\text{full}} \, P_{k|k}\, \Phi_{\text{full}}^T + Q_{\text{full}}.
\end{aligned}
\]
We assume the SPDE begins at equilibrium with initial mean $m_{0|0} = 0 \in \mathbb{R}^{2DN_\text{space}}$ and initial covariance $P_{0|0} = K_\text{space} \otimes P_\infty$. 

\paragraph{Assimilation Step:}  
When an observation $y_{k+1}^{\text{obs}}$ is available, we define a time-dependent observation matrix $H_{k+1}$ selecting the observed locations and perform the update:
\[
\begin{aligned}
v_{k+1} &= y_{k+1}^{\text{obs}} - H_{k+1}\, m_{k+1|k}, \\
S_{k+1} &= H_{k+1}\, P_{k+1|k}\, H_{k+1}^T + R_{k+1}, \\
K_{k+1} &= P_{k+1|k}\, H_{k+1}^T\, S_{k+1}^{-1}, \\
m_{k+1|k+1} &= m_{k+1|k} + K_{k+1}\, v_{k+1}, \\
P_{k+1|k+1} &= P_{k+1|k} - K_{k+1}\, H_{k+1}\, P_{k+1|k}.
\end{aligned}
\]

\subsubsection{RTS Smoothing}

After running the Kalman filter over the fine time grid, the RTS smoother refines the estimates using future observations. For $k = T-1, T-2, \dots, 0$, the smoother performs: \[
\begin{aligned}
J_k &= P_{k|k}\, \Phi_{\text{full}}^T\, \left(P_{k+1|k}\right)^{-1}\\
m_{k|T} &= m_{k|k} + J_k\, \left(m_{k+1|T} - m_{k+1|k}\right), \\
P_{k|T} &= P_{k|k} + J_k\, \left(P_{k+1|T} - P_{k+1|k}\right) \, J_k^T.
\end{aligned}
\]
The smoothed state estimates, $m_{k|T}$ and $P_{k|T}$, represent the posterior mean and covariance over the latent spatio-temporal field given all available observations at time index $k$.

\subsection{Computational Costs for Posterior Sampling} \label{sec:SPDE-comp-costs-AL}

Consider a separable spatio-temporal GP with a Mat\'{e}rn $3/2$ temporal kernel and a Helmholtz spatial kernel. Let $N_s$ denote the number of spatial grids that we are doing sequential inference on. In such a setting, the size of the transition matrices $F_\text{full}$, $\Phi_\text{full}$, $Q_\text{full}$ would be of the size $2N_s \times 2N_s$ as they are all Kronecker products of $2\times 2$ base matrices and the $N_s \times N_s$ spatial Gram matrix $K_\text{space}$. 

To sample from such a GP model without any observation for $N_t$ time steps would involve propagating an initial condition ${f}_0$ using \[
\begin{split}
f_{k+1} &= \Phi_{\text{full}} f_k + \sqrt{Q_\text{full}} z_k,\qquad z_k \sim {N}\left(0,\, I\right), \\
f_0 &\sim N(0, K_\text{space} \otimes P_\infty)
\end{split}
\]
for $k = 0, 1, \ldots, N_t$. Therefore, given the specifications of the transition, the total computational costs of prior sampling are $O((2N_s)^2N_t)$. This can be further improved using the Kronecker matrix-vector product described in Section \ref{sec:kronecker-product-math}. 

Given $N_\text{obs}$ observations taken at $N_\text{obs-time}$ observation times and $N_\text{obs-loc}$ observation locations, the regression of these data using the SPDE approach involves, minimally, filtering the data at $N_\text{obs-time}$ observation times. Each observation time requires propagation and assimilation with the costs, so the total computational cost is $O(N_\text{obs-time} N_\text{obs-loc}^3)$. Similarly, the likelihood training of these data will cost $O(N_\text{obs-time} N_\text{obs-loc}^3)$. 

If one wishes to learn about the posterior predictive distribution, the prediction test points (test locations and test times) should be added to the filtering and smoothing step. For example, to predict at $N_\text{space}$ locations (almost completely) distinct from the observation locations, the computational cost of obtaining such a predictive distribution is\[
O((N_{\text{obs-loc}} + N_{\text{space}})^2 N_{\text{obs-time}}).
\]
Subsequently, the computational cost of sampling from these posterior predictive distributions for $N_\text{sampleT}$ prediction times at $N_\text{space}$ prediction locations would be of $O((2N_\text{space})^2N_\text{sampleT})$.

\subsection{Connection to Spatial SPDE-GP}

The SPDE-GP framework we employ in this paper follows the work of \citet{hartikainen2010kalman} and \citet{sarkka2013spatiotemporal}, while another seemingly distinct version of \citet{lindgren2011explicit} and \citet{lindgren2022spde} exists in the spatial statistics literature. Here, we will briefly highlight their connection and their shared origin in \citet{whittle1954stationary} and \citet{whittle1963stochastic}. In this section, we will denote the version by \citet{hartikainen2010kalman} as the \textit{temporal version} and the version by \citet{lindgren2011explicit} the \textit{spatial version}. 

Both the spatial and temporal versions are established on the following S(P)DE interpretation of the Mat\'{e}rn GP due to Peter Whittle \citep{whittle1954stationary, whittle1963stochastic}: A $d$ dimensional Mat\'{e}rn GP with scale parameter $\kappa$ and smoothness parameter $\nu$ is the solution to the following SPDE \[
(\kappa^2 - \Delta)^{\alpha / 2} x(u) = W(u) 
\]
where $\Delta$ is the Laplacian, $\alpha = \nu + d / 2$, $x$ is the process of interest, and $W$ is a $d$-dimensional white noise process with unit variance. The solution model has kernel variance $\sigma^2$ with \[
\sigma^2 = \frac{\Gamma(\nu)}{\Gamma(\nu + d/2) (4\pi)^{d/2} \kappa^{2\nu}}
\]
where $\Gamma$ is the Gamma function, and the kernel variance can be adjusted by scaling the white noise process $W$.

The temporal version sets $d=1$, and the spatial version sets $d=2$. The temporal version, additionally, manipulates the SDE into the following form, which enables a Gaussian linear model expression for Kalman filtering and RTS smoothing:\[
(\kappa + \Delta)^2 x(u) = \tilde{W}(u)
\]
for an adjusted white noise process $\tilde{W}$. The above SDE is linear and admits closed-form transition densities, so its sequential inference is exact. 

The spatial version of \citet{lindgren2011explicit} involves approximation. The method first constructs a finite-dimensional basis expansion of $x$, inspired by the finite element method, like \[
x(u) = \sum_{k=1}^n \phi_k(u) w_k
\]
for basis function $\{ \phi_k \}$ and (zero-mean) Gaussian weights $\{ w_k \}$, then solve for the precision matrix for the joint Gaussian weights. This turns the original Gaussian field (alternative name for 2D spatial GP) of $x$ into a Gaussian Markov random field \citep{rue2005gaussian} of $w$, which can be solved more efficiently. 

\section{Vanilla SPDE Exchange Details} \label{sec:vase}

Let $f$ be a spatio-temporal GP with separable kernel $k = k_sk_t$ where $k_s$ is the spatial kernel and the time kernel $k_t$ is Mat\'{e}rn (with smoothness $3/2$ here, but can be extended to any half-integer smoothness). We are interested in a spatial grid $R$ and a temporal grid $\mathcal{T}$ that is a regular interval of timestamps in $[0,T]$ with time gap $\delta$. The temporal grid does not have to be regular, but for the simplicity of presentation, this is assumed. A non-regular temporal grid would only require us to keep track of the time increments and adjust the SPDE-GP updates accordingly, which do not affect the general vanilla SPDE exchange (VaSE) method. 

Consider at time $t_n \in \mathcal{T}$ we have observations $\mathcal{D}$. The VaSE method to sample from the posterior $f|\mathcal{D}$ at full spatio-temporal grid for time $t_n$ onwards consists of three parts: \begin{enumerate}
    \item posterior GP update with $\mathcal{D}$ for the extended GP $\bm{f}=[f, f']^T$ using standard GP,
    \item compute the posterior predictive distribution of $\bm{f}$ at $R \times \{t_n\}$,
    \item sample from the posterior predictive and propagate for the remaining timestamps using SPDE-GP.
\end{enumerate}
The first and last parts apply the vanilla and the SPDE methods, while the middle part acts as the exchange to switch between the two modes of GP inferences. Details of the SPDE-GP were presented in Section \ref{sec:SPDE-GP-details}.

When the smoothness parameter of the temporal Mat\'{e}rn kernel becomes $(2j + 1)/2$ for some nonnegative integer $j$, we will adjust the above procedure with an extended GP $\bm{f} = [f, f', \cdots, f^{(j)}]^T$ with all $j$ derivatives. This is to ensure the SPDE-GP propagation has the necessary dimensions. 

\subsection{Computational Cost Analysis and Comparison} \label{sec:vase-cost-comparison}

We conduct a cost analysis here. Let $N_\text{obs}$ denote the number of observations, $N_\text{obs,t}$ denote the number of distinct observation time slices, $N_\text{pred,s}$ and $N_\text{pred,t}$ denote the number of prediction spatial and temporal points. Below, we will carefully present the computational cost of drawing a posterior sample under the standard GP, SPDE-GP, and VaSE, where the cost orders for each portion of the sampling cost are included in square brackets. 

Following from Section \ref{sec:gp-appendix}, the \textit{standard} GP posterior sample requires computing the posterior predictive distribution's covariance [$N_\text{obs}^3$] and taking its matrix square root [$N_\text{pred,s}^3N_\text{pred,t}^3$], as well as multiplying the Gaussian noise [$N_\text{obs}^2 N_\text{pred,s}N_\text{pred,t}$] and adding the computed posterior mean [$N_\text{obs}N_\text{pred,s}^2N_\text{pred,t}^2$]. 


Following from Section \ref{sec:SPDE-GP-details}, the \textit{SPDE-GP} approach to draw a posterior sample consists of first creating update matrices via Kronecker products of the temporal update matrices and the full Gram matrix of all $N_\text{pred,s} + N_\text{obs}$ considered spatial points, followed by matrix inversions for assimilating observations [$(N_\text{obs} + N_\text{pred,s})^3 N_\text{obs,t}$], then using the coordinates corresponding to test points to scale Gaussian noise and propagate forward to generate the sample [$N_\text{pred,s}^2 N_\text{pred,t}$].

Finally, the \textit{VaSE} approach generates the mean vector and covariance matrix of the posterior predictive distribution at the current time over all spatial test points [$N_\text{obs}^3 + N_\text{pred,s}^2 N_\text{obs} + N_\text{pred,s} N_\text{obs}^2$], then this spatial component is used to scale Gaussian noise and propagate forward to generate the sample [$N_\text{pred,s}^2 N_\text{pred,t}$]. 

In summary, the three methods' costs to generate one posterior sample are \[
    \begin{split}
    &(\textbf{Standard}) ~O(N_\text{obs}^3 + N_\text{pred,s}^3 N_\text{pred,t}^3 +N_\text{obs}^2 N_\text{pred,s}N_\text{pred,t} \\
    &\qquad \qquad \quad+ N_\text{obs}N_\text{pred,s}^2N_\text{pred,t}^2), \\
    &(\textbf{SPDE}) ~O((N_\text{obs} + N_\text{pred,s})^3 N_\text{obs,t} + N_\text{pred,s}^2 N_\text{pred,t}), \\
    &(\textbf{VaSE}) ~O(N_\text{obs}^3 + N_\text{pred,s}^2 N_\text{obs} + N_\text{pred,s} N_\text{obs}^2\\
    &\qquad \qquad  + N_\text{pred,s}^2 N_\text{pred,t}).
    \end{split}
\]

\section{Ablation Studies} \label{sec:ablation}

Here, we conduct two ablation studies on the BALLAST algorithm outlined in Algorithm \ref{alg:BALLAST} to investigate the appropriate choice of sample number $J$ and the length of the forward projection time horizon.

\subsection{Sample Number}\label{sec:ablation-sample-num}

This study investigates the choice of posterior sample number $J$ of the BALLAST utility. In particular, we have the following acquisition function of BALLAST-EIG: \[
\mathbb{E}_F \left[ \log \det \left( I + \sigma_\text{obs}^2 K(X_n^{+P^T(s)},X_n^{+P^T(s)}) \right) \right]
\]
where $F$ is the random vector field following the posterior distribution, $\sigma_\text{obs}^2$ is the variance of the observation noise, $X_n$ is the existing observations’ locations, $P^T(s)$ is the projected trajectory locations of a drifter deployed at $s$ as well as the existing drifters from the current time till time $T$ (also the terminal time of the deployment) under the random vector field $F$, and $X_n^{+P^T(s)} = X_n \cup P^T(s)$ is the aggregated observation locations. 

The above quantity does not admit a closed-form expression, and we will approximate it using Monte Carlo with samples $F^{(1)}, F^{(2)}, \ldots, F^{(J)}$ from the posterior.  This gives us the following:\[
\begin{split}
&B(s; J | X_n) := \frac{1}{J} \sum_{j=1}^J \\
&\qquad \log \det \left( I + \sigma_\text{obs}^2 K(X_n^{+P_{F^{(j)}}^T(s)},X_n^{+P_{F^{(j)}}^T(s)}) \right)  \\
&B(s; \infty | X_n) := \mathbb{E}_F \\
&\qquad \left[ \log \det \left( I + \sigma_\text{obs}^2 K(X_n^{+P_{F}^T(s)},X_n^{+P_F^T(s)} )\right) \right]
\end{split}
\]
where $P_j^T(s)$ denotes the projected trajectory locations under the vector field sample $F^{(j)}$. Under these utilities, we would arrive at different optimal deployment locations, i.e. we would have \[
\begin{split}
    s_j^* &:= \argmax_s B(s; J | X_n), \\
    s^* &:= \argmax_s B(s; \infty | X_n).
\end{split}
\]
In addition, there is also the true optimal decision where we use the exact ground truth vector field $F_\text{true}$ to simulate the trajectories, i.e. \[
\begin{split}
    &B(s; \text{true} | X_n) := \\
    &\quad \log \det \left( I + \sigma_\text{obs}^2 K(X_n^{+P_\text{true}^T(s)},X_n^{+P_\text{true}^T(s)}) \right)
\end{split}
\]
where $P^T_\text{true}$ are generated using $F_\text{true}$. This would give us the true optimal location $s^*_\text{true} := \argmax_s B(s; \text{true} | X_n)$. 

To investigate the quality of the decision, as well as selecting the appropriate choice of $J$, we compute the utility gaps in the following two ways: \[
\begin{split}
\text{Gap}_\text{MC}(J)&:= B(s^*; \infty | X_n) - B(s^*_J; \infty | X_n) \\
&\approx B(s^*; 200 | X_n) - B(s^*_J; 200 | X_n),\\
\text{Gap}_\text{Full}(J)&:= B(s^*_\text{true} ; \text{true} | X_n) - B(s^*_J; \text{true} | X_n). \\ 
\end{split}
\]
The first gap is converging to zero as $J$ increases, whereas the second gap is not going to converge due to the difference between the probabilistic posterior model and the deterministic ground truth field. 

\subsubsection{Synthetic Ground Truth}

In this experiment, we will generate the ground truth field using a temporal Helmholtz GP model under the same specification as the synthetic ground truth experiment in the paper. The entire deployment spans $[0, 10]$. The decision times considered are $3, 5, 7$, and drifters are uniformly placed before the decision time every $0.5$ unit time. 

In the plots, to put all values on the same scale, the percentage gaps, instead of raw gaps, are used, i.e. \[
\begin{split}
\text{PercGap}_\text{MC}(J)&:= \text{Gap}_\text{MC}(J) / B(s^*; \infty | X_n) \times 100, \\
\text{PercGap}_\text{Full}(J)&:= \text{Gap}_\text{Full}(J) / B(s^*_\text{true} ; \text{true} | X_n) \times 100. \\ 
\end{split}
\]
Also, we consider two additional policies, Uniform and EIG, for comparison. The EIG policy looks at the location that maximises the expected information gain while considering only the initial deployment location (so no projection $P^T$). The uniform is selected uniformly from all the possible locations, which we compute using the average utility across the locations $s$. 

In the result plot of Figure \ref{fig:ablation_m_combined}, the Monte Carlo gaps are shown on the first row, whereas the full gaps are shown on the second row. A horizontal line at 1 is added on the first row, with the corresponding $J$ values for the intersections added. We can see that after a rapid decay until around $J  = 20$, the gap for BALLAST is reducing very slowly. The same happens for the second row with full gaps. We can also notice a superiority of BALLAST decisions over that of EIG and Uniform for almost all choices of sample number $J$.

\begin{figure*}
    \centering
    \includegraphics[width=0.9\linewidth]{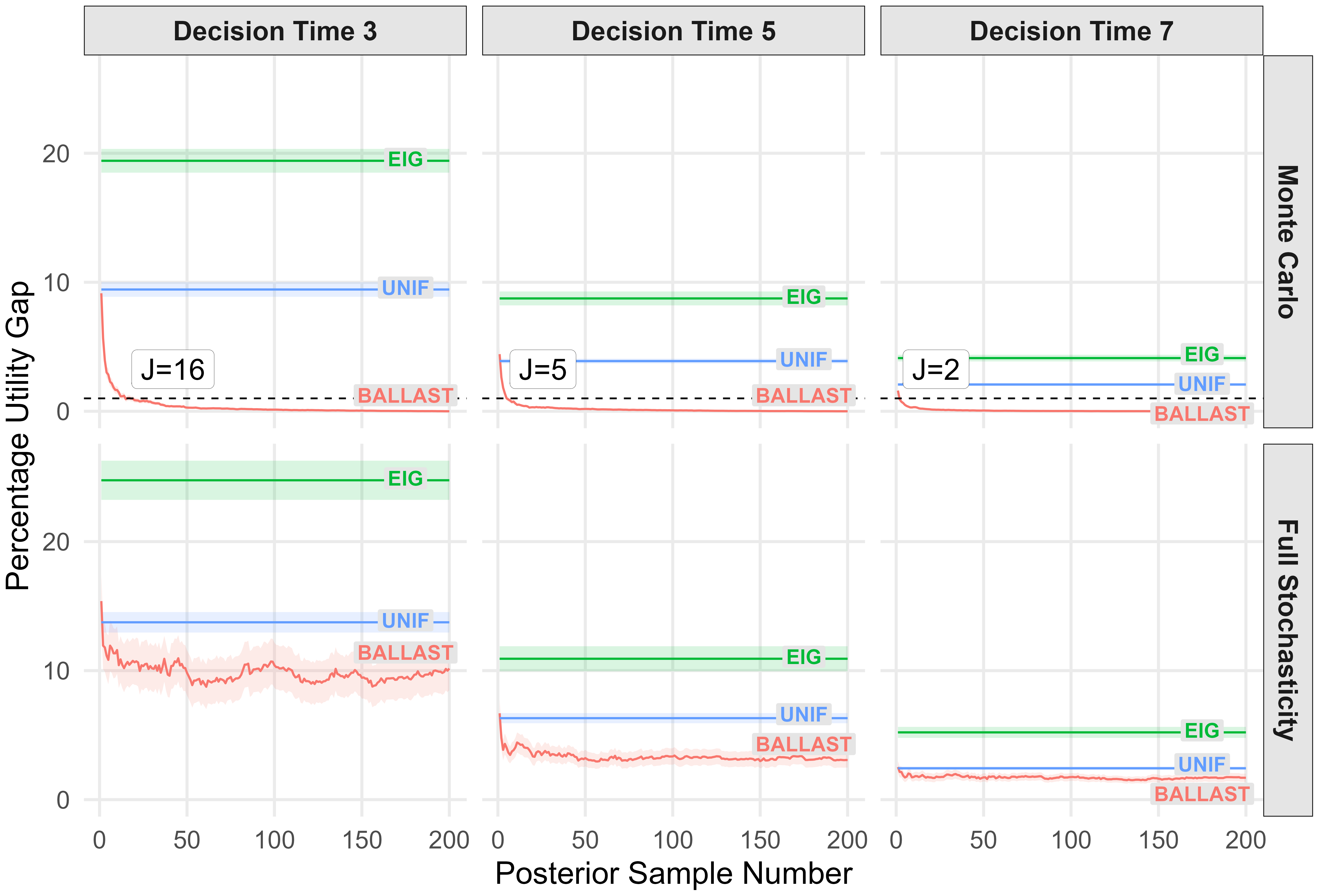}
    \caption{Combined plot of ablation under synthetic ground truth for decision times $t= 3, 5, 7$ with both Monte Carlo and full stochasticity percentage gaps. Three policies, EIG, Uniform, and BALLAST, are considered, and the two standard error bounds are shown.}
    \label{fig:ablation_m_combined}
\end{figure*}

\subsubsection{SUNTANS}

We can also conduct a similar ablation study on the SUNTANS dataset, as outlined in Section \ref{sec:experiment-suntans}. In Figure \ref{fig:ablation_m_suntans_combined}, we realise that setting $J = 20$ is reasonable, especially when considering the full stochasticity gap. The result in Section \ref{sec:experiment-suntans} is also reassuring on the quality of $J=20$. We have also tried using $J = 100$, and the result is comparable to that of $J = 20$. 

\begin{figure*}
    \centering
    \includegraphics[width=0.95\linewidth]{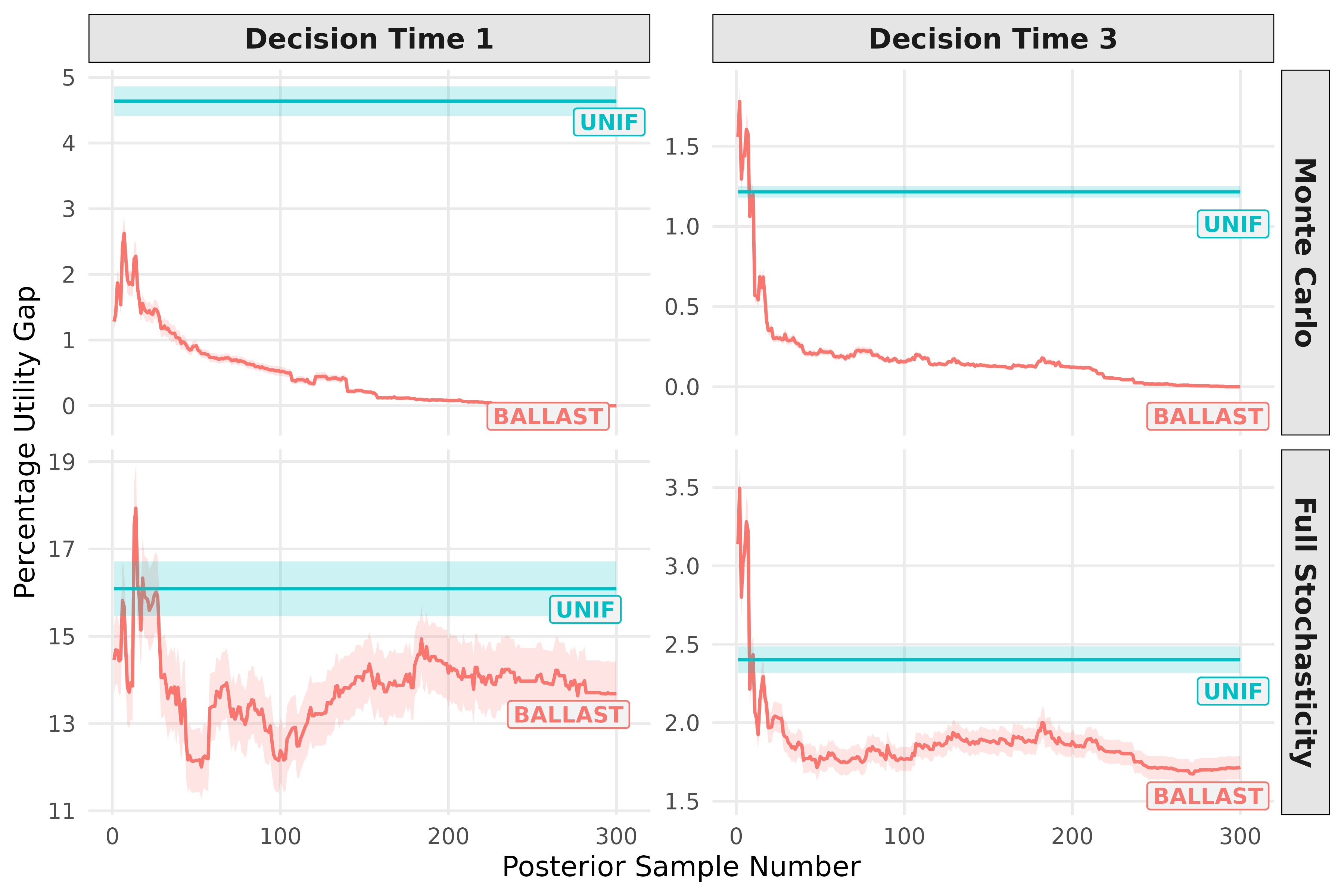}
    \caption{Combined plot of ablation under SUNTANS ground truth for decision times $t= 3, 5, 7$ with both Monte Carlo and full stochasticity percentage gaps. Three policies, EIG, Uniform, and BALLAST, are considered, and the two standard error bounds are shown.}
    \label{fig:ablation_m_suntans_combined}
\end{figure*}

\subsection{Time Horizon Length} \label{sec:ablation-horizon}

Another aspect of BALLAST is the time horizon for the forward projection of drifter trajectories. Algorithm \ref{alg:BALLAST} denotes the aggregated additional observations obtained after placing an observer at $\bm{s}$ under the sampled field $F^{(j)}$ as $P^T_j(\bm{s})$, where $T$ - the terminal time of the full deployment - indicates the end time of forward projection for the sampled field. Although it is natural to set the projection end time as $T$, using an earlier end time will reduce computational costs. In this ablation study, we investigate whether an earlier $T$ should be used. 

The general setup is identical to that of Section \ref{sec:ablation-sample-num}, and we only consider the Monte Carlo gap at decision time $t = 5$, without the loss of generality. In addition to the Uniform, EIG, and BALLAST decisions, we also consider BALLAST decisions with end time $T_\text{end} < T$, in particular, we have $T_\text{end} = 0.1, 0.5, 1, 2, 3$. The study is conducted using 100 different randomly generated ground truth vector fields to provide uncertainty quantification of the utility gaps.  

\begin{figure*}
    \centering
    \includegraphics[width=0.8\linewidth]{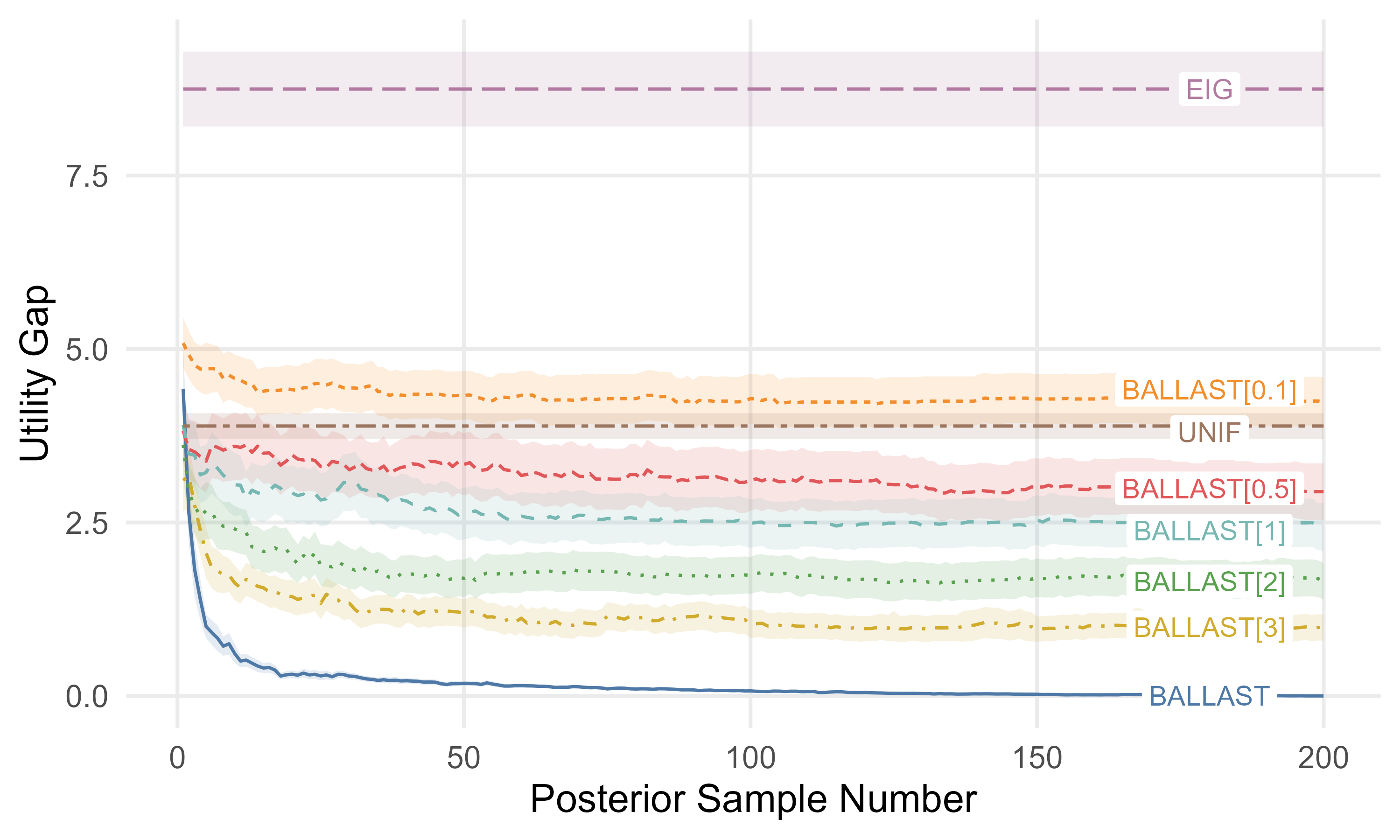}%
    \caption{Utility gap with 2 standard error bounds of UNIF, EIG, and BALLAST decisions over sample number $J$ at decision times $t = 5$. BALLAST decisions with end times $T_\text{end} = 0.1, 0.5, 1, 2, 3$ are labelled as BALLAST$[T_\text{end}]$.}
    \label{fig:ablation_J_horizon}
\end{figure*}

As shown in Figure \ref{fig:ablation_J_horizon}, the performance of BALLAST decisions increases uniformly as $T_\text{end}$ increases, suggesting that while computation allows, we should always set the end time as long as we can. Therefore, we will, as default, set $T_\text{end} = T$. 

\subsection{Forward Simulation Discretisation Step Size}

Discretisation of the forward trajectory
We have now conducted an additional ablation study to investigate how sensitive the decision quality is to changing step sizes of forward projection. The paper considered stepsize = 0.05 as the default. We investigate the optimality gap of the decisions when the stepsize is 0.01 and 0.1 in the same setup as that of Section \ref{sec:experiment-ablation}. Table \ref{tab:stepsize-ablation} below reports the percentage utility gap (note: out of 100) of various step sizes using different numbers of posterior samples for decision time t=5. At the sample number J=20 (our recommended choice), all three stepsizes are well within $1\%$ optimal. Notice that although they vary and the smaller stepsizes yield a lower optimality gap, the improvement is mostly negligible, suggesting robustness of decision quality to forward projection approximations.

\begin{table}[ht]
\centering
\setlength{\tabcolsep}{3pt}
\begin{tabular}{|c|c|c|c|c|c|}
\hline
\textbf{Stepsize} & \textbf{$J=20$} & \textbf{$J=60$} & \textbf{$J=100$} & \textbf{$J=140$} & \textbf{$J=180$} \\
\hline
0.01 & 0.263 & 0.136 & 0.0906 & 0.0829 & 0.0851 \\
\hline
0.05 & 0.187 & 0.274 & 0.189  & 0.177  & 0.155  \\
\hline
0.10 & 0.243 & 0.338 & 0.296  & 0.281  & 0.248  \\
\hline
\end{tabular}
\caption{Percentage utility gap (\%) for different forward projection stepsizes and posterior sample sizes. All tested stepsizes remain within $1\%$ optimality at $J=20$, indicating robustness of decision quality to trajectory discretisation.}
\label{tab:stepsize-ablation}
\end{table}

\subsection{Surrogate Model Choice}

To isolate the gains of BALLAST from the surrogate model choice, we conducted an additional experiment under similar setup as Section \ref{sec:experiment-synthetic} where an alternative surrogate model is used. In particular, we replaced the Helmholtz spatial kernel with an independent vector-output RBF kernel (called the velocity kernel in \cite{berlinghieri2023gaussian}) -– widely considered suboptimal –- and equipped EIG and BALLAST with such a misspecified surrogate. We compared SOBOL, BALLAST, EIG-mis and BALLAST-mis (with misspecified velocity kernel) and ranked their relative performance (1 = Best). As shown in the average rank Table \ref{tab:misspecified_surrogate_ablation}, BALLAST-mis outperforms EIG-mis and even has similar performance to the SOBOL policy. While it is obvious that BALLST with a well-specified surrogate is the best performing policy here, our additional results provide further evidence that the gains we observed are due to the look-ahead construction of our proposed BALLAST algorithm.

\begin{table}[ht]
\centering
\setlength{\tabcolsep}{3pt} 
\renewcommand{\arraystretch}{1.2}
\begin{tabular}{|c|c|c|c|c|c|}
\hline
\textbf{Policy} & \textbf{$n=3$} & \textbf{$n=7$} & \textbf{$n=11$} & \textbf{$n=15$} & \textbf{$n=19$} \\
\hline
EIG-mis      & 4.00 & 4.00 & 4.00 & 4.00 & 4.00 \\
\hline
BALLAST-mis & 1.70 & 1.90 & 2.00 & 2.25 & 2.10 \\
\hline
SOBOL       & 2.25 & 2.10 & 2.20 & 2.10 & 2.10 \\
\hline
BALLAST     & 2.05 & 2.00 & 1.80 & 1.65 & 1.80 \\
\hline
\end{tabular}
\caption{Average ranks (1 = best) of different policies under a misspecified surrogate model. BALLAST-mis consistently outperforms EIG-mis and remains competitive with SOBOL, highlighting the benefit of the BALLAST look-ahead strategy.}
\label{tab:misspecified_surrogate_ablation}
\end{table}

\subsection{Varying Flow Evolution Speed}

To explore potential failure modes of BALLAST, we considered one such mode of the flow’s evolution being too slow. For slowly evolving flows, we would expect the vector field to be almost stationary, in which case a space-filling design would be more desirable. To investigate the effect of varying flow speed, we considered changing the lengthscale of the temporal kernel from the present l = 2.5 (medium) to l = 5.0 (slow) as well as l = 0.5 (fast). Table \ref{tab:varying-flow-evolution} below shows the average rank of the performance of SOBOL, BALLAST, and UNIF for different flow speeds, with 1 = Best in the same setup as that of Section \ref{sec:experiment-synthetic}. We noticed that as the flow slows down, BALLAST starts to worsen and become on par with SOBOL, while it is much better than SOBOL for fast-flowing fields.

\begin{table*}[ht]
\centering
\begin{tabular}{|c|c|c|c|c|c|c|}
\hline
\textbf{Scenario} & \textbf{Policy} & \textbf{$n=3$} & \textbf{$n=7$} & \textbf{$n=11$} & \textbf{$n=15$} & \textbf{$n=19$} \\
\hline
Fast   & SOBOL    & 2.10 & 2.45 & 2.30 & 2.15 & 2.25 \\
\hline
Fast   & BALLAST  & 1.70 & 1.05 & 1.15 & 1.05 & 1.05 \\
\hline
Medium & SOBOL    & 1.55 & 1.55 & 1.60 & 1.65 & 1.60 \\
\hline
Medium & BALLAST  & 1.45 & 1.45 & 1.40 & 1.35 & 1.40 \\
\hline
Slow   & SOBOL    & 1.55 & 1.60 & 1.50 & 1.45 & 1.45 \\
\hline
Slow   & BALLAST  & 1.45 & 1.40 & 1.50 & 1.55 & 1.55 \\
\hline
\end{tabular}
\caption{Average ranks (1 = best) of SOBOL and BALLAST under different flow speeds when comparing SOBOL, UNIF, and BALLAST. BALLAST performs substantially better for fast-evolving flows, while its advantage diminishes as the flow becomes slower.}
\label{tab:varying-flow-evolution}
\end{table*}

\section{Additional Experiment Details} \label{sec:experimental_details_further}

Various experimental details of the investigations in Section \ref{sec:experiment} that are omitted or condensed in the main text due to space constraints are described here. 

\subsection{Observations from Lagrangian Observers} \label{sec:lagrangian_observations}

For a background time-dependent vector field $V(s, t)$ and a considered spatial region $R$, we simulate the trajectory of a Lagrangian observer initialised at time $t$ and location $s$ using Euler discretisation with a stepsize of $\delta_t = 0.01$, i.e. we have iterative updates\[
s_{n+1} = s_n + \delta_t V(s_n, t_n), \qquad t_{n+1} = t_n + \delta_t
\]
for $n = 0, 1, \cdots$ with initial conditions $s_0 = s, t_0 = t$. We would also check if the observer has left the considered region each iteration, i.e. check $s \in R$, and terminate the update when it leaves, i.e. $s \notin R$. Furthermore, we only have access to the vector field $V(s,t)$ at the discretised spatial grid, so the velocity information within the same grid will be identical. 

Given the underlying trajectory of an observer, we make observations at regular time intervals. In the experiments considered in Section \ref{sec:experiment}, the observations are taken every $\delta_{\text{obs}} = 0.05$ with additive i.i.d. Gaussian noise with standard deviation $0.1$, so \[
y_k = V(s_k, t_k) + \varepsilon_k, \qquad \varepsilon_k \sim N(0, 0.1^2I)
\]
for observation $y_n$ at time $t_k$ and location $s_k$. Like above, the velocity information within the same grid of $V$ is set to be identical. 

\subsection{Kernel Construction}

BALLAST involves obtaining samples from the posterior GP at various sample times and locations. As described in Section \ref{sec:sampling-posterior-SPDE}, we first regress the observations using an extended GP $\bm{f} = [f, \partial_tf]^T$, then sample an initial condition for the posterior sample from it, which is then propagated via the SPDE approach. 

In the temporal Helmholtz GP model we consider in this paper (see Section \ref{sec:gp-model}), the temporal component is set to be a separable Mat\'{e}rn $3/2$ kernel $k_t$. Implementing this model with the extended GP setup then requires double the output dimension for partial derivative values. One way of implementing the multi-output GP, such as the one considered here, is to augment the input space with a binary indicator variable $z$, so the transformed scalar-output GP $g$ is constructed like $g(\cdot, z=0) = f(\cdot)$ and $g(\cdot, z = 1) = \partial_k f(\cdot)$. See \url{https://docs.jaxgaussianprocesses.com/_examples/oceanmodelling/} for an implemented example using GPJax of \citet{Pinder2022}. 

Next, we notice that many commonly used Python packages for GP implement the Mat\'{e}rn kernel with a clipped distance function. For example, GPJax implements the distance function of the kernel using \verb|jnp.sqrt(jnp.maximum(jnp.sum(| \verb|(x - y) ** 2),1e-36))| to compute the distance between $x, y$. When taking the automatic hessian of the Mat\'{e}rn kernel implemented with the clipped distance at $x = y$, we would obtain $0$ instead of the desired $3$ (when $k_t$ is a Mat\'{e}rn $3/2$ with lengthscale $1$ and variance $1$). To bypass this issue, we should re-implement the Mat\'{e}rn kernel using \verb|jnp.abs(x-y)| (for example) instead. Note that this only works for one-dimensional inputs $x,y$ - which is the case considered here.

\subsection{Considered Policies}
\label{sec:considered_policies}

The six policies considered in the experiments of Sections \ref{sec:experiment-synthetic} and \ref{sec:experiment-suntans} are uniform (\textbf{UNIF}), Sobol sequence (\textbf{SOBOL}), distance-separation heuristic (\textbf{DIST-SEP}), \textbf{EIG}, BALLAST with optimised hyperparameters (\textbf{BALLAST-opt}) and BALLAST with true hyperparameters (\textbf{BALLAST-true}). Below, we will describe the details of the policies. 

\paragraph{UNIF} The UNIF policy draws uniformly a location from the spatial grid $R$ at each deployment.

\paragraph{SOBOL} The SOBOL policy is implemented using Python's \verb|scipy.stats.qmc.sobol| with scramble. We first generate points on the unit square $[0,1)^2$, and then map it to our considered spatial grid $R$ where the points are converted into the indices of the spatial grid. Note that since we know the total number of deployments, the points are generated at once, which is not necessary as they can be produced sequentially. This policy is selected as a representative of space-filling designs such as \citet{tukan2024efficient}.

\paragraph{EIG} The EIG policy implements the standard active learning of \eqref{eqn:vanilla-AL} where the GP model used is always identical to that of BALLAST-true. 

\paragraph{BALLAST-true} The BALLAST-true policy implements Algorithm \ref{alg:BALLAST} where the GP hyperparameters are not optimised (i.e. Step 4 is skipped) but uses pre-determined, true values. The sample number $J$ is set to $20$ following the ablation results in Sections \ref{sec:experiment-ablation} and \ref{sec:ablation-sample-num}. The projection horizon is set to be the terminal time $T$ following the ablation result in Section \ref{sec:ablation-horizon}.  

\paragraph{BALLAST-opt} The BALLAST-opt policy implements the full Algorithm \ref{alg:BALLAST} where the GP hyperparameters are optimised. The hyperparameters are estimated using the L-BFGS optimiser. Note that we impose manually-set bounds on the hyperparameter values during optimisation to mimic uniform priors with finite support. For the synthetic ground truth of Section \ref{sec:experiment-synthetic}, we set $[0.1,1]$ bounds to all GP hyperparameters except for the temporal kernel, which we set $[0.1,3]$. For the SUNTANS ground truth of Section \ref{sec:experiment-suntans}, we set $[0.1,5]$ bounds to stream kernel variance and potential kernel lengthscale, $[10,20]$ bounds to time kernel variance and potential kernel variance, a $[0.1,1]$ bound to stream kernel lengthscale, and a $[0.1,3]$ bound to time kernel lengthscale. We should note that these bounds are set loosely to encourage the optimiser to stay within reasonable ranges. We have also observed that minor adjustments to the bounds do not change the results noticeably, as one would expect. 

\paragraph{DIST-SEP} The deployment policy proposed by \citet{chen2024launching} works under the Lagrangian data assimilation inference framework, and considers two criteria: (1) ``the drifters are deployed at locations where they can travel long distances within the given time window to collect more information about the flow field'', and (2) ``it is desirable to place the drifters at locations that are separate from each other''. 

These two criteria are computed using Lagrangian descriptors \citep{mancho2013lagrangian, chen2024lagrangian} in \citet{chen2024launching}, which are obtained using a Monte Carlo average from posterior samples. As we are working under a different inference framework, we adapt their criteria and compute similar quantities for the two criteria using GP posteriors and BALLAST samples. 

Here, we compute the criterion value for each point of the spatial grid. We compute the drifter length for each posterior sample (drawn exactly like BALLAST-true) by computing the total distance of sampled trajectories (the sum of the Euclidean distances between two consecutive observation locations). The separation is computed using the (negative) Euclidean distance between the potential deployment location and the closest existing observation locations. Finally, we turn the values into index ranks and average the two ranks from the two criteria to make the final maximising decision. 

The authors of \citet{chen2024launching} justified the two criteria in Section 4.2 by comparing them to the expected information gain policy. Therefore, it is not surprising to observe DIST-SEP performing worse than EIG in the experiments of Section \ref{sec:experiment}. 

\subsection{Computational Resources} \label{sec:computational_resources}

The experiments of Section \ref{sec:experiment} are conducted on the SLURM computer cluster, where 100 CPUs are used for an embarrassingly parallel implementation of different starting seeds. Each job uses no more than 15GB of memory. The codes are implemented in Python 3.10, and mostly GPJax \citep{Pinder2022} version 0.11.0. The plots in the main texts are generated using either matplotlib in Python or ggplot2 in R. 

\end{document}